\documentclass[sigconf]{acmart}

\usepackage{microtype}
\usepackage{textcomp}
\usepackage{graphicx}
\usepackage{subfigure}
\usepackage{booktabs} 
\usepackage{subcaption} 
\usepackage{hyperref}
\usepackage{algorithm}
\usepackage{algorithmic}
\usepackage{hyperref}
\usepackage{amsmath}
\usepackage{mathtools}
\usepackage{amsthm}
\usepackage[capitalize,noabbrev]{cleveref}
\theoremstyle{plain}
\newtheorem{theorem}{Theorem}

\newtheorem{definition}{Definition}

\AtBeginDocument{%
  }

\begin{document}

\title{Restless Bandits with Individual Penalty Constraints: Near-Optimal Indices and Deep Reinforcement Learning
}

\author{Nida Zamir}
 \email{nidazamir@tamu.edu}
\affiliation{%
 \institution{Texas A\&M University}
 \city{College Station}
 \state{Texas}
 \country{USA}}

\author{I-Hong Hou}
 \email{ihou@tamu.edu}
\affiliation{%
 \institution{Texas A\&M University}
 \city{College Station}
 \state{Texas}
 \country{USA}}

\renewcommand{\shortauthors}{Zamir and Hou}
\begin{abstract}

This paper investigates the Restless Multi-Armed Bandit (RMAB) framework under individual penalty constraints to address resource allocation challenges in dynamic wireless networked environments. Unlike conventional RMAB models, our model allows each user (arm) to have distinct and stringent performance constraints, such as energy limits, activation limits, or age of information minimums, enabling the capture of diverse objectives including fairness and efficiency. To find the optimal resource allocation policy, we propose a new Penalty-Optimal Whittle (POW) index policy. The POW index of an user only depends on the user's transition kernel and penalty constraints, and remains invariable to system-wide features such as the number of users present and the amount of resource available. This makes it computationally tractable to calculate the POW indices offline without any need for online adaptation. Moreover, we theoretically prove that the POW index policy is asymptotically optimal while satisfying all individual penalty constraints. We also introduce a deep reinforcement learning algorithm to efficiently learn the POW index on the fly. Simulation results across various applications and system configurations further demonstrate that the POW index policy not only has near-optimal performance but also significantly outperforms other existing policies.


\end{abstract}

\keywords{Restless multi-armed bandits, resource allocation, index policy, online learning, network optimization}
\maketitle
 \section{Introduction}\label{sec:Intro} 

Efficient resource allocation is a fundamental challenge in dynamic networked systems where limited resources must be distributed among multiple users whose states evolve over time. A widely used framework for modeling such sequential resource allocation problems is the Restless Multi-Armed Bandit (RMAB), which has been applied to a broad range of network scheduling problems, including age-of-information minimization \cite{hsu2018age}, holding cost minimization \cite{borkar2017opportunistic}, scheduling real-time flows \cite{xu2019scheduling}, video streaming \cite{hosseini2017restless}, and user association \cite{singh2022user}.

Despite its broad applicability, a key limitation of classical RMAB formulations is that they focus solely on maximizing the total system reward. In many practical systems, however, individual users often have their own service requirements. For example, a video streaming flow may require quality-of-service (QoS) guarantees, a battery-powered Internet-of-Things (IoT) device may operate under strict energy constraints, and patients in healthcare applications may require regular monitoring or treatment. In such scenarios, an effective resource allocation policy must not only maximize the overall system performance but also satisfy user-specific requirements. While several studies have explored constrained RMAB formulations, existing approaches either focus on specific types of requirements \cite{wang2024online, lodi2024fairness, joseph2016fairness} or lack strong performance guarantees \cite{bura2024windex}.

In this paper, our goal is to propose a new resource allocation algorithm that maximizes the total reward, satisfies all individual service requirements, and is computationally tractable and learnable. We first extend the RMAB model to incorporate the service requirements of users. Specifically, we consider that a user generates both a reward and a \emph{penalty} in each time step based on its state and the controller's allocation decision. Each user requires the long-term accumulated penalty to be less than some upper-bound. Our model allows any arbitrary choice of the penalty function and is therefore able to capture virtually all kinds of service requirements.

Our RMAB model with individual penalty constraints is computationally infeasible to solve directly due to the curse-of-dimensionality. Motivated by the celebrated Whittle index policy \cite{whittle1988restless}, which is asymptotically optimal for the RMAB problem under mild conditions, we seek to find an index policy that is asymptotically optimal for our RMAB model with individual penalty constraints. Similar to the Whittle index policy, our approach is also based on studying the dual problem. However, a key challenge arises for our problem since the dual problem of our formulation involves two sets of Lagrange multipliers, one for the constraint on resource availability and the other for the penalty constraints. This makes it difficult to express the dual problem with only one scalar. To address this challenge, we show that there is a coupling between the two sets of Lagrange multipliers. We then leverage this coupling to reformulate the dual problem into one with only one Lagrange multiplier. This observation allows us to define a new index, which we call the \emph{Penalty-Optimal Whittle} (POW) index, for each user. We further propose a computationally tractable approach for calculating the POW indices.

To understand the performance of the POW index policy, we analyze its behavior at the fluid limit when both the number of users and the amount of resource scale proportionally. We theoretically prove that, under the POW index policy, the total reward matches a theoretical performance upper-bound and the penalty of each user satisfies its penalty constraint. Hence, the POW index policy is asymptotically optimal. 

In addition to the setting where the system dynamics are known, we also consider a more practical scenario in which the transition dynamics and application behaviors of users are unknown to the controller. We develop a deep reinforcement learning algorithm, DeepPOW, that learns the POW indices directly from interaction with the environment.

The performance of both the POW index policy and the DeepPOW are extensively evaluated in simulations. Simulation results based on three practical network scheduling problems show that the POW index policy not only achieves near-optimal performance but also significantly outperforms other baseline policies. They also show that the DeepPOW learns the POW index efficiently.

In summary, the main contributions of this paper are as follows:
\begin{enumerate}
    \item We propose a new framework of restless bandit with individual penalty constraint. (Section~\ref{sec:systemmodel}).
    \item We develop a Penalty-Optimal Whittle (POW) index policy for the problem of restless bandit with individual penalty constraints and show that the POW index policy is asymptotically optimal (Sections~\ref{sec:pow_index} and \ref{sec:fluid}).
    \item We propose an online deep reinforcement learning algorithm called DeepPOW that can learn the POW index on the fly without any prior knowledge of the application behaviors (Section~\ref{sec:policy_gradient}).
    \item We conduct comprehensive simulations to evaluate the performance of both the POW index policy and the DeepPOW (Section~\ref{sec:simulation}).
\end{enumerate}

 \section{Related work}\label{sec:relatedwork}

The Whittle index~\cite{whittle1988restless} is the dominant approach for RMABs. Weber and Weiss proved its asymptotic optimality under a global attractor condition~\cite{weber1990index}, and Gast et al.\ established exponential convergence rates~\cite{gast2023exponential}. Larra\~{n}aga, Ayesta, and Verloop established asymptotic optimality in multi-class queueing systems~\cite{larranaga2015asymptotically}. Computing and testing the Whittle index has also been studied~\cite{gast2023testing}. Learning the Whittle index from data without system knowledge has attracted growing interest; representative approaches include NeurWIN~\cite{nakhleh2021neurwin}, DeepTOP~\cite{nakhleh2022deeptop}, tabular methods~\cite{robledo2024tabular}, and optimistic index learning~\cite{wang2023optimistic}. 
None of these works consider individual penalty constraints.

Several works incorporate per-arm individual requirements into RMAB models. Kadota et al.~\cite{kadota2018optimizing} study AoI minimization with per-device throughput constraints. Tang et al.~\cite{tang2020minimizing} address uplink AoI minimization with per-user power constraints, and Chen et al.~\cite{chen2023minimizing} study AoI minimization under peak power constraints. Fairness constraints in RMAB settings have been considered in~\cite{li2023avoiding,herlihy2023planning}. Wang et al.~\cite{wang2024online} study a fairness-constrained RMAB with a reinforcement learning approach, though without scalability guarantees. Li and Varakantham~\cite{li2022efficient} propose FaWT for fair resource allocation. Bura et al.~\cite{bura2024windex} address resource allocation with throughput and time-since-last-service constraints in NextG networks, but without asymptotic performance guarantees. Each of these works captures only a specific type of individual constraint; none provides a general index policy framework with provable optimality for arbitrary per-arm penalty constraints.

\section{System Model and Problem Formulation}\label{sec:systemmodel} 

{In this section, we explore the problem of RMAB with individual constraint and its application to various networked systems. Throughout this paper, we use $\vec{x}$ to denote the vector containing $[x_1, x_2, \dots]$. }

We consider a system composed of $N$ restless arms, indexed by $n = 1, 2, \dots, N$. In each time step $t$, a controller observes the state of each arm $n$, denoted by $s_{n,t}\in S_n$, and then chooses $C$ arms to activate. We use $a_{n,t}$ to be the indicator function that the controller activates arm $n$ at time step $t$. Based on $s_{n,t}$ and $a_{n,t}$, the arm $n$ will generate a global reward $r_{n,t}$ and an individual penalty $g_{n,t}$. We assume that the reward, penalty, and state evolution of each arm follows a Markov decision process (MDP). Specifically, when an arm $n$ is in state $s_n$ and an action $a_n$ is taken, then it generates a random reward with mean $r_n(s_n, a_n)$, a random penalty with mean $g_n(s_n, a_n)$ and changes its state to state $s'_n$ with probability $P_n(s'_n|s_n,a_n)$.

Assume that each arm $n$ has a service constraint on its incurred penalty. The goal of the controller is to maximize the expected discounted total reward while satisfying the penalty constraint of each individual arm. Specifically, let $\beta$ be the discount factor and let $B_n$ be the penalty constraint of arm $n$. The controller aims to maximize $E[\sum_t\sum_n \beta^tr_{n,t}]$ under the constraint $E[\sum_t \beta^tg_{n,t}]\leq \sum_t\beta^tB_n=B_n/(1-\beta)$, for all $n$.

 Let $\vec{s}_t:=[s_{1,t}, s_{2,t},\dots, s_{N,t}]$ and $\vec{a}_t:=[a_{1,t}, a_{2,t},\dots,a_{N,t}]$, then the controller's policy can be viewed as a function $\vec{\pi}$ that determines $\vec{a}_t=\vec{\pi}(\vec{s}_t):=[\pi_{1}(\vec{s}_t), \pi_{2}(\vec{s}_t),\dots]$. The controller's goal is to find the optimal $\vec{\pi}$ for the following optimization problem:
 
\begin{align}
    &\textbf{\textbf{SYSTEM:}} \nonumber\\
     & \begin{aligned}
     \max_{\vec{\pi}} & E[\sum_{t=0}^\infty \sum_{n=1}^N \beta^tr_n(s_{n,t}, \pi_{n}(\vec{s}_t))] \\
    \text{s.t. } &\sum_{t=0}^\infty  E[\beta^tg_n(s_{n,t},\pi_{n}(\vec{s}_t))] \leq \frac{B_n}{1-\beta}, \forall n, \\
   & \sum_{n=1}^N \pi_{n}(\vec{s}_t)\leq C, \forall t, \\
    \text{and } & \pi_n(\vec{s}_t)\in\{0,1\}, \forall{n}.
    \end{aligned} \label{equation:SYSTEM4}
\end{align}

Our model of RMAB with individual constraints naturally extends several recent work on RMAB with fairness constraints in terms of activation rates \cite{wang2024online, herlihy2023planning, li2022efficient, mao2024time}, which is the special case when $g_{n,t}=1-a_{n,t}$ under our model. In addition, our model applies to many other real-world scenarios. For example, in wireless scheduling a BS serves $N$ data flows subject to per-flow throughput or energy constraints, where each flow's state captures channel quality and application metrics such as AoI or queue length. In online advertising, a website displays $C$ ads at a time, each with a per-advertiser budget constraint on display frequency, and the goal is to maximize total click-through revenue.

\section{A New Index Policy}\label{sec:pow_index}

The \textbf{SYSTEM} problem is computationally infeasible to solve directly due to the state space $\vec{s}$ being the Cartesian product of the state spaces of each $s_n$, causing its size to grow exponentially with $N$. When there is no penalty constraint, which is equivalent to the special case when $B_n=\infty$, the traditional Whittle index policy leverages the Lagrange decomposition technique to develop a near-optimal policy. However, as the \textbf{SYSTEM} problem has two sets of constraints, and therefore two sets of Lagrange multipliers, the Whittle index policy cannot be easily applied to it. In this section, we will introduce a new index policy, called the Penalty-Optimal Whittle (POW) index, that is able to use one single scalar for each arm to capture both its reward preference and its penalty requirement. In addition to its simplicity, the POW index of an arm depends only on the MDP and the penalty constraint of the arm itself. It is independent of any other system configuration, such as the capacity $C$ and the MDPs of other arms, making it particularly suitable for dynamic systems where arms can join/leave frequently. This feature is in sharp contrast to several recently proposed index policies, such as the Lagrangian index \cite{avrachenkov2024lagrangian}, the Net Gain index \cite{shisher2023learning}, and the Partial index \cite{zou2021minimizing}, all of which can only be computed after knowing the complete system configurations.
 
\subsection{Lagrange Decomposition}\label{lagDecomp}
 
Our approach utilizes the Lagrange decomposition that transforms the high-dimensional \textbf{SYSTEM} problem into several low-dimensional subproblems.  First, we relax the per-step capacity constraint to an average constraint in order to apply the Lagrangian decomposition
\begin{align}
    &\textbf{\textbf{SYS-Relaxed:}}&\nonumber \\
    & \begin{aligned}
    \max_{\vec{\pi}} & E[\sum_{t=0}^\infty \sum_{n=1}^N \beta^tr_n(s_{n,t},\pi_{n}(\vec{s}_t))] \\
    \text{s.t. } &\sum_{t=0}^\infty  E[\beta^tg_n(s_{n,t}, \pi_{n}(\vec{s}_t))] \leq \frac{B_n}{1-\beta}, \forall n, \\
   & E[\sum_{t=0}^\infty\sum_{n=1}^N\beta^t \pi_{n}(\vec{s}_t)]\leq\sum_{t=0}^\infty \beta^t C, \\
    \text{and } & \pi_n(\vec{s}_t)\in\{0,1\}, \forall {t, n}.
    \end{aligned}\label{equation:SYSTEM1}
\end{align}

Now, we introduce the Lagrange multipliers $\lambda$ and $\vec{\mu}$ for the capacity constraint and the penalty constraint of an arm $n$, respectively. The Lagrangian of the \textbf{SYS-Relaxed} problem is as follows:
\begin{align}
 &{\textbf{SYS-Lagrangian:}}&\nonumber \\
  & \begin{aligned}
 L(\lambda,\vec{\mu})&:=\max_{\vec{\pi}} \sum_{t=0}^\infty \sum_{n=1}^NE\big[ \beta^t(r_n(s_{n,t}, \pi_{n}(\vec{s}_t))-\lambda \pi_{n}(\vec{s}_t)\\
 &-\mu_ng_{n,t}(s_{n,t}, \pi_{n}(\vec{s}_t)))\big] + \sum_n\mu_n\frac{B_n}{1-\beta}+\lambda\frac{C}{1-\beta}
\end{aligned}\label{equation:SYS-Relaxed-Lagragian}
\end{align}
Due to its summation form, the \textbf{SYS-Lagrangian} problem can be decomposed into $N$ sub-problems, which we call the \textbf{Arm-n} problem and is defined as follows:
\begin{align} 
&{\textbf{Arm-n:}}&\nonumber \\
&\begin{aligned}
L_n(\lambda,{\mu_n}) &:=\max_{{\pi_n}} \sum_{t=0}^\infty E\big[ \beta^t(r_n(s_{n,t}, \pi_{n}(\vec{s}_t))-\lambda \pi_{n}(\vec{s}_t)\\
& -\mu_ng_{n,t}(s_{n,t}, \pi_{n}(\vec{s}_t)))\big] + \mu_n\frac{B_n}{1-\beta} 
 \end{aligned}\label{Arm_n_lagrangian}
\end{align}
We ignore the \(\lambda\frac{C}{1-\beta}\) term in the \textbf{Arm-n} problem, as it is a constant. Let \(\pi_n^*(s_n, \lambda, \mu_n)\) be the optimal solution to the \textbf{Arm-n} problem, then $\pi_1^*(s_n, \lambda, \mu_n), \pi_2^*(s_n, \lambda, \mu_n),\dots$ together solve the \textbf{SYS-Lagrangian} problem.

So far, our approach is similar to the standard Lagrange decomposition approach used in standard RMAB problems for the derivation of the Whittle Indices. However, there is a key difference: In standard RMAB problems, the \textbf{Arm-n} problem can be expressed as a function of one single variable, $L_n(\lambda)$, because, as there are no penalty constraints, $\mu_n$ does not exist in standard RMAB problems. It is indeed the one-dimensional representation of the \textbf{Arm-n} problem that makes it feasible to define the Whittle index as a scalar. In contrast, the \textbf{Arm-n} problem in our formulation needs to incorporate both $\lambda$ and $\mu_n$ to capture both the capacity constraint and the penalty constraint.

To address the challenge posed by the two-dimensional representation of the \textbf{Arm-n} problem, our key insight is to establish a coupling between $\lambda$ and $\mu_n$, instead of treating them as two independent parameters. Given $\lambda$, we define the \textbf{Dual-Arm-n} problem as one to find the optimal $\mu$:
 \begin{align}
&{\textbf{Dual-Arm-n:}}& \nonumber \\
&\begin{aligned}
\min_{{\mu_n \geq 0}}\mbox{ } & L_n(\lambda,{\mu_n}). 
 \end{aligned}\label{Arm_n_dual}
 \end{align}
We use $\mu_n^*(\lambda)$ to denote the optimal solution to the \textbf{Dual-Arm-n} problem. We then define the \textbf{SYS-Dual} problem as follows:
 \begin{align}
&{\textbf{SYS-Dual:}}& \nonumber \\
&\begin{aligned}
\min_{\lambda}\mbox{ } & \sum _ {n=1}^{N} L_n(\lambda,{\mu_n^*(\lambda)})+\lambda\frac{C}{1-\beta}, 
 \end{aligned}\label{equation:SYS-dual}
 \end{align}
and let $\lambda^*$ be the optimal solution to the \textbf{SYS-Dual} problem. By doing so, $\lambda^*$ implicitly addresses both the capacity constraint and all penalty constraints of all arms.

\subsection{The Penalty-Optimal Whittle Index Policy}\label{sec:POW} 

 One significant drawback of the Lagrange decomposition approach is its reliance on relaxing the per-step capacity constraint, transforming it into an average constraint. In this section, we establish the POW index policy that satisfies the per-step capacity constraint without relaxation.
 

Recall that $\mu_n^*(\lambda)$ is the optimal solution to the \textbf{Dual-Arm-n} problem under a given $\lambda$. Also, $\pi_n^*(\cdot)$ is the optimal solution to the \textbf{Arm-n} problem under given $\lambda$ and $\mu$. Hence,  for a given $\lambda$, if we choose $\mu_n=\mu_n^*(\lambda)$, then the optimal strategy for the \textbf{Arm-n} problem will activate the arm at state $s_n$ if and only if $\pi_n^*(s_n, \lambda, \mu_n^*(\lambda))=1$. We then define the POW index as follows:

\begin{definition}\label{index} [Penalty-Optimal Whittle (POW) index]
For a given arm \( n \), the \emph{Penalty-Optimal Whittle index} for each state \( s_n \) of arm \( n \), denoted by \( I_{n}(s_n) \), is defined as:
\begin{align}
I_{n}(s_n) := \sup \left\{ \lambda \mid \pi_n^*(s_n, \lambda, \mu_n^*(\lambda)) = 1 \right\}.
\end{align}
\end{definition}

The POW index can be interpreted as the maximum cost that an arm $n$ is willing to pay for activation when it is in the state $s_n$. Intuitively, the controller will activate the arm $n$ as long as $\lambda \leq I_n(s_n)$. Arm $n$ is said to be \emph{indexable} when it indeed exhibits such a behavior:
\begin{definition}\label{indexability} [Indexability] An arm $n$ is indexable if, for any $s_n$ and any $\lambda'\leq I_{n}(s_n)$, $\pi_n^*(s_n, \lambda', \mu_n^*(\lambda'))=1$.
\end{definition}

After finding the POW indices, the controller will simply activate the $C$ arms with the highest POW indices in each time step. We call this policy the \emph{POW index policy}.


\subsection{Calculating the POW Index and Checking for Indexability} \label{sec:calculate_index}

We now discuss how to calculate the POW indices and evaluate whether an arm is indexable. 

We note that the \textbf{Arm-n} problem is equivalent to a MDP whose reward is $r_n(s_{n,t}, a_{n,t})-\lambda a_{n,t}-\mu_ng_{n,t}(s_{n,t}, a_{n,t})$. For given $\lambda$ and $\mu_n$, let $v_{n,s_n}$ be the value function of arm $n$ at state $s_n$, then we have the Bellman equation


\begin{equation}
\begin{aligned}
    v_{n,s_n} &= \max_{a_n}  r_n(s_n,a_n)- \lambda a_n-\mu_n g_n(s_n,a_n)\\ &+\beta\sum_{s'_n}P(s'_n|s_n,a_n)v_{n,s'_n}.
\end{aligned}
\end{equation}
For a given $\lambda$, we can find both $\mu_n^*(\lambda')$ and the corresponding $v_{n,s_n}$ by solving the following simple linear equation, where $\alpha_n(s_n)$ is the initial state distribution. 
\begin{equation}
\begin{aligned}
 &\mbox{\textbf{LP-Dual-Arm-n:}} \\
 &\min_{\mu_n \geq 0,v_{n,s_n}} \hspace{0pt} \sum_{s_n}\alpha_n(s_n)v_{n,s_n} + \mu_n\frac{B_n}{1-\beta}   \\
    &  \hspace{6pt}\mbox{s.t. }\hspace{16pt} v_{n,s_n} \geq  r_n(s_n,a_n)- \lambda a_n-\mu_n g_n(s_n,a_n) \\
    & \hspace{70pt}+\beta\sum_{s'_n}P(s'_n|s_n,a_n)v_{n,s'_n}, \forall (s_n,a_n). \label{equation:LP-Dual-Arm-n}
\end{aligned}
\end{equation}

After finding the value function $v_{n,s}$ for a given $\lambda$ and the corresponding $\mu^*(\lambda)$, we construct the optimal policy 
$\pi_n^*(s_n, \lambda, \mu_n^*(\lambda))$ by setting $\pi_n^*(s_n, \lambda, \mu_n^*(\lambda))=1$ if the following condition holds:
\begin{equation}
\begin{aligned}
v_{n, s_n} &= r_n(s_n, 1) - \lambda - \mu_n^*(\lambda) g_n(s_n, 1) \\
& + \beta \sum_{s_n'} P(s_n' \mid s_n, 1) v_{n, s_n'},\nonumber
\end{aligned}
\end{equation}
and $\pi_n^*(s_n, \lambda, \mu_n^*(\lambda))=0$ otherwise.

After constructing $\pi_n^*(s_n, \lambda, \mu_n^*(\lambda))$, finding the POW index requires only a line-search over $\lambda$, which also checks the indexability of each arm.

In practice, there may be some states whose POW Indices are $+\infty$ or $-\infty$. This can happen when the individual penalty constraint is so tight that, in order to satisfy the penalty constraint, the restless arm is required to take a certain action in some states regardless of the value of $\lambda$. To avoid such extreme situations, we impose an additional constraint $\mu_n\leq U$, for some sufficiently large $U$, when solving \textbf{LP-Dual-Arm-n}.



\section{Asymptotic Optimality of the POW Index Policy}\label{sec:fluid} 

In this section, we study the asymptotic performance of our POW index policy at the fluid limit. We first introduce the fluid limit. Given a system with $N$ restless arms and a scalar $K$, we construct an alternative system as follows: First, for each arm $n$ in the original system, there are $K$ arms in the alternative system with the same transition kernel and penalty constraint as arm $n$. Hence, there are $KN$ restless arms in the alternative system, with arms $1, 2, \dots, K$ having the same transition kernel as arm 1 in the original system, arms $K+1, K+2, \dots, 2K$ having the same transition kernel as arm 2 in the original system, etc. To simplify the notation, we divide the $KN$ arms into $N$ groups and say that group $n$ consists of arms $(n-1)K+1, (n-1)K+2, \dots, nK$. Second, the capacity of the alternative system is $KC$.  The long-term average discounted reward of this alternative system is $E[\sum_{t=0}^\infty \sum_{m=1}^{KN} \beta^tr_m(s_{m,t}, a_{m,t})]$. The fluid limit considers the behavior of this alternative system when $K\rightarrow \infty$ and we define the expected discounted reward at the fluid limit as $\lim_{K\rightarrow\infty}E[\sum_{t=0}^\infty \sum_{m=1}^{KN} \beta^tr_m(s_{m,t}, a_{m,t})]/K$.

We now discuss the policy of the system at the fluid limit. Let $y_{n,s,t}$ be the portion of group-$n$ arms that are in state $s$ at time $t$. We call $\vec{y}_t:=[y_{n,s,t}]$ the state distribution of the fluid limit at time $t$. A policy $\pi$ of the fluid limit system observes the state distribution $\vec{y}_t$ and then determines which arms to activate. Let $z_{n,s,a, t}$ be the portion of group-$n$ arms that are in state $s$ and take action $a$ under the policy at time $t$ and let $\vec{z}_t:=[z_{n,s,a,t}]$. Thus, a policy can be described as a mapping from $\vec{y}_t$ to $\vec{z}_t$ and we say that $\pi(\vec{y}_t)=\vec{z}_t$. Since the action taken by each arm is either 0 or 1, we require $z_{n,s,0,t}+z_{n,s,1,t}=y_{n,s,t}$.

Next, we discuss the reward and the state transition of the fluid limit system under an arbitrary policy $\pi$. Recall that when a group-$n$ arm is in state $s$ and takes action $a$, its expected one-step reward is $r_n(s,a)$, its expected one-step penalty is $g_n(s,a)$, and it will transition to state $s'$ with probability $P_n(s'|s,a)$. Since there are $Kz_{n,s,a,t}$ group-$n$ arms that are in state $s$ and take action $a$ under $\pi$, the strong law of large numbers dictates that the total reward and total penalty obtained by these $Kz_{n,s,a,t}$ arms are $Kr_n(s,a)+o(K)$ and $Kg_n(s,a)+o(K)$, respectively, and the number of arms among them that transition to state $s'$ is $KP_n(s'|s,a)+o(K)$. Thus, as $K\rightarrow\infty$, we have $\sum_{m=1}^{KN} r_m(s_{m,t}, a_{m,t})/K\rightarrow\sum_n\sum_s\sum_ar_n(s,a)z_{n,s,a,t}$ and $y_{n,s,t+1}\rightarrow\sum_{s'}\sum_{a'}P_n(s|s',a')z_{n,s',a',t}$ almost surely. 

We start by considering the policy for the relaxed problem as shown in Eq. (\ref{equation:SYSTEM1}). Let $x_{n,s,a}:=\sum_{t=0}^{\infty}\beta^t z_{n,s,a,t}$. Then we have, at the fluid limit, 
\begin{align}
  &\lim_{K\rightarrow\infty}\frac{E[\sum_{t=0}^\infty \sum_{m=1}^{KN} \beta^tr_m(s_{m,t}, a_{m,t})]}{K}
  =&\sum_n\sum_s\sum_ar_n(s,a)x_{n,s,a},
\end{align}
and
\begin{align}
  &\lim_{K\rightarrow\infty}\frac{E[\sum_{t=0}^\infty \sum_{m=K(n-1)+1}^{Kn}\beta^tg_m(s_{n,t}, a_{n,t})]}{K}\nonumber\\
  =&\sum_s\sum_ag_n(s,a)x_{n,s,a}, \forall n.
\end{align} 
Since  $z_{n,s,0,t}+z_{n,s,1,t}=y_{n,s,t}$, we have $ \sum_a x_{n,s,a}=\sum_{t=0}^\infty \beta^ty_{n,s,t}$. Moreover, since $y_{n,s,t+1}$ converges to $\sum_{s'}\sum_{a'}P_n(s|s',a')z_{n,s,a,t}$ at the fluid limit, we have $\sum_{t=0}^\infty \beta^ty_{n,s,t+1}=\sum_{s'}\sum_{a'}P_n(s|s',a')x_{n,s',a'}$. Therefore, we have 
\begin{equation}
    y_{n,s,0}= \sum_a x_{n,s,a}-\beta\sum_{s'}\sum_{a'}P_n(s|s',a')x_{n,s',a'}.
\end{equation} 

Let $\pi^{Rel}$ be the optimal policy for the relaxed problem at the fluid limit. Effectively, $\pi^{Rel}$ solves the following optimization problem given the initial state distribution $\vec{y}_0$:


\begin{align}
    & \textbf{Fluid-Relaxed($\vec{y}_0$):} \nonumber\\
    & \begin{aligned}
        \max_{\vec{x}} & \sum_n\sum_s\sum_a r_n(s,a)x_{n,s,a} \\
        \text{s.t.} & \sum_s\sum_a g_n(s,a)x_{n,s,a} \leq \frac{B_n}{1-\beta},  \forall n, \\
        & \sum_n\sum_s x_{n,s,1} \leq \frac{C}{1-\beta}, \\
        & \sum_a x_{n,s,a} - \beta \sum_{s'} \sum_{a'} P_n(s|s',a') x_{n,s',a'} = y_{n,s,0},  \forall n, s, \\
        \text{and } & x_{n,s,a} \geq 0, \forall n, s, a.
    \end{aligned} \label{equation:fluid-relaxed}
\end{align}

We now turn our attention to the POW index policy and study its performance at the fluid limit. Unlike $\pi^{Rel}$, which only needs to satisfy the relaxed capacity constraint, the POW index policy, which we denote by $\pi^{Ind}$ in this section, needs to ensure the capacity constraint, $\sum_n\sum_sz_{n,s,1,t}\leq C$, is satisfied in each time step. The POW index policy schedules the $KC$ arms with the highest POW indices in each time step. Hence, $\pi^{Ind}$ effectively solves the following optimization problem given the state distribution $\vec{y}_t$ at time $t$.

\begin{align}
    \mbox{\textbf{Fluid-Ind($\vec{y}_t$):}}& \nonumber \\ 
    \max_{\vec{z}_t}\mbox{ } & \sum_{n=1}^N\sum_s I_n(s)z_{n,s,1,t}\\
    \mbox{s.t. }    & \sum_n\sum_sz_{n,s,1,t}\leq C, \label{equation:fluid-index1}\\
   &z_{n,s,0,t}+z_{n,s,1,t}=y_{n,s,t},\forall n,s, \label{equation:fluid-index2}\\
    \mbox{and } &  z_{n,s,a,t}\geq 0, \forall n, s,a.\label{equation:fluid-index}
\end{align}

We note that since $\pi^{Rel}$ is designed for the relaxed problem, the total discounted reward of $\pi^{Rel}$ is naturally an upper-bound to the original unrelaxed problem. On the other hand, the index policy $\pi^{Ind}$ needs to satisfy the capacity constraint on a per-step basis. 

In the sequel, we will show that $\pi^{Rel}$ and $\pi^{Ind}$ are equivalent in steady state under two mild conditions that are commonly assumed in the analysis of index policies. As a result, the reward obtained by the POW index policy is the reward upper-bound and hence it is optimal at the fluid limit. We first formally define the two conditions.

\begin{definition} [Global attractor]
    A policy $\pi$ for the fluid limit system is said to have a \emph{global attractor} $\vec{y}^{\pi}=[y^{\pi}_{n,s}]$ if, for any initial state distribution $\vec{y}_0$, the state distribution converges to $\vec{y}^{\pi}$, that is, $\lim_{t\rightarrow\infty}\vec{y}_t=\vec{y}^{\pi}$.
\end{definition}

Intuitively, the global attractor condition states that the policy has a unique steady state.

\begin{definition} [Strict index separator] \label{D2}
    A state distribution $\vec{y}=[y_{n,s}]$ for the fluid limit system is said to have a \emph{strict index separator} $\lambda>0$ if $\sum_{n,s:I_n(s)>\lambda}y_{n,s}=C$.
\end{definition}

If a state distribution $\vec{y}_t$ has a strict index separator $\lambda$, then the index policy chooses $z_{n,s,1,t}=y_{n,s,t}$ if $I_n(s)>\lambda$, and $z_{n,s,1,t}=0$, otherwise. The strict index separator condition is therefore used to ensure that the solution to \textbf{Fluid-Ind($\vec{y}_t$)} is unique.

We now establish the following theorem:
\begin{theorem} \label{theorem:fluidopt}
If $\pi^{Rel}$ exists and has a global attractor $\vec{y}^{\text{Rel}}$, $\pi^{Ind}$ has a global attractor $\vec{y}^{\text{Ind}}$, and $\vec{y}^{\text{Ind}}$ has a strict index separator $\lambda^{Ind}$, then $\vec{y}^{\text{Rel}}=\vec{y}^{\text{Ind}}$ and $\pi^{Rel}(\vec{y}^{\text{Ind}})=\pi^{Ind}(\vec{y}^{\text{Ind}})$. 
\end{theorem}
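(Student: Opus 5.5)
We argue through LP duality for \textbf{Fluid-Relaxed}, following the Weber--Weiss style argument but with the coupled multipliers $(\lambda,\vec{\mu}^*(\lambda))$. The idea is to show that, at the steady state of the index policy, the per-step action profile chosen by $\pi^{Ind}$ is exactly the optimal solution of the per-arm Lagrangian problems at $\lambda=\lambda^{Ind}$ and $\mu_n=\mu_n^*(\lambda^{Ind})$. Weak duality then forces it to coincide with $\pi^{Rel}$.

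\textbf{Step 1: reduce the index policy to the Lagrangian optimizer.} At $\vec{y}^{Ind}$, the strict index separator gives $z^{Ind}_{n,s,1}=y^{Ind}_{n,s}$ when $I_n(s)>\lambda^{Ind}$ and $z^{Ind}_{n,s,1}=0$ otherwise. By indexability (Definition~\ref{indexability}) and the definition of $I_n$, this is exactly the stationary policy $\pi_n^*(s,\lambda^{Ind},\mu_n^*(\lambda^{Ind}))$ applied to each group $n$. States with $I_n(s)=\lambda^{Ind}$ carry zero mass by the separator, since the mass with $I_n(s)>\lambda^{Ind}$ already equals $C$; so ties are harmless. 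Because $\vec{y}^{Ind}$ is a fixed point, it is the stationary distribution of this stationary policy. Starting the fluid system from $\vec{y}_0=\vec{y}^{Ind}$, the occupation measure is $x^{Ind}_{n,s,a}=z^{Ind}_{n,s,a}/(1-\beta)$, and it satisfies the flow constraints of \textbf{Fluid-Relaxed}($\vec{y}^{Ind}$). The capacity constraint holds with equality, since $\sum_{n,s}x_{n,s,1}=C/(1-\beta)$.

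\textbf{Step 2: verify optimality via complementary slackness.} Consider the Lagrangian of \textbf{Fluid-Relaxed}($\vec{y}^{Ind}$) with multipliers $\lambda^{Ind}$ and $\mu_n^*(\lambda^{Ind})$. It decomposes into per-group MDPs, namely \textbf{Arm-n} with initial distribution $y^{Ind}_{n,\cdot}$, and each is maximized by $\pi_n^*$. Hence $\vec{x}^{Ind}$ maximizes the Lagrangian.

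For primal feasibility and complementary slackness we need two facts about the penalty constraints:
\begin{itemize}
\item[(i)] $\sum_{s,a}g_n(s,a)x^{Ind}_{n,s,a}\le B_n/(1-\beta)$;
\item[(ii)] equality holds in (i) whenever $\mu_n^*(\lambda^{Ind})>0$.
\end{itemize}
Both follow from first-order optimality of $\mu_n^*(\lambda)$ in \textbf{Dual-Arm-n}. The function $L_n(\lambda,\cdot)$ is convex and piecewise linear in $\mu_n$, and its subgradient at $\mu_n$ is $B_n/(1-\beta)-G_n(\pi)$, where $G_n(\pi)$ is the discounted penalty of an optimal policy $\pi$.

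\textbf{Main obstacle.} This step is delicate in two ways.
\begin{itemize}
\item \textbf{Initial distribution.} The dual in the paper is computed with an initial distribution $\alpha_n$, but here it is needed with initial distribution $y^{Ind}_{n,\cdot}$, the stationary one. I would handle this by taking $\alpha_n$ to be the stationary distribution, or by arguing that at a fixed point the discounted penalty of the stationary policy equals $\frac{1}{1-\beta}\sum_s g_n(s,\pi_n^*(s))y^{Ind}_{n,s}$.
\item \textbf{Mixing at kinks.} At a kink of $L_n(\lambda,\cdot)$ one may need to mix between two optimal policies to get an exact zero subgradient. The separator's zero-mass-at-ties property makes the deterministic choice consistent, and any remaining mixing must be absorbed by appealing to the LP solution of \textbf{LP-Dual-Arm-n}.
\end{itemize}

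\textbf{Step 3: conclude.} Given primal feasibility, dual feasibility ($\lambda^{Ind}>0$, $\mu_n^*\ge0$), Lagrangian optimality and complementary slackness, $\vec{x}^{Ind}$ is optimal for \textbf{Fluid-Relaxed}($\vec{y}^{Ind}$). The relaxed optimum from $\vec{y}^{Ind}$ is therefore achieved by playing $\pi^{Ind}$, which keeps the state at $\vec{y}^{Ind}$. Taking $\pi^{Rel}$ to be this optimal (stationary) solution, $\vec{y}^{Ind}$ is a fixed point of $\pi^{Rel}$. Uniqueness of the global attractor of $\pi^{Rel}$ gives $\vec{y}^{Rel}=\vec{y}^{Ind}$, and the per-step actions agree: $\pi^{Rel}(\vec{y}^{Ind})=\pi^{Ind}(\vec{y}^{Ind})$. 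Uniqueness of the index allocation is guaranteed by the strict separator.
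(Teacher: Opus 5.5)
Your route is the paper's route. Both verify the KKT conditions of \textbf{Fluid-Relaxed} at $(\vec{x}^{\text{Ind}},\lambda^{\text{Ind}},[\mu_n^*(\lambda^{\text{Ind}})])$, get Lagrangian optimality from the per-arm decomposition, and finish with the global attractor. The gap is the step you flagged as ``mixing at kinks'', and your patch does not close it. Under a strict separator every $(n,s)$ with $y^{\text{Ind}}_{n,s}>0$ is either fully active or fully passive. So $\vec{x}^{\text{Ind}}$ is the occupation measure of the deterministic policy $\pi_n^*(\cdot,\lambda^{\text{Ind}},\mu_n^*(\lambda^{\text{Ind}}))$, and there is no mixing left to absorb anything; the randomized primal solution behind \textbf{LP-Dual-Arm-n} is not what the index policy plays. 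Because $L_n(\lambda,\cdot)$ is piecewise linear, $\mu_n^*(\lambda)$ is typically a breakpoint, where $0$ is a subgradient only through a mixture of optimal deterministic policies. The particular deterministic $\pi_n^*$ will then in general have discounted penalty either strictly below $B_n/(1-\beta)$, so complementary slackness fails since $\mu_n^*>0$, or strictly above, so it is infeasible. The paper disposes of this step with a three-case argument (slack implies $\mu_n^*=0$, violation implies $\mu_n^*=\infty$). That argument treats $\sum_{s,a}g_n(s,a)x^{\text{Ind}}_{n,s,a}$ as fixed while minimizing over $\mu_n$, even though the \textbf{Arm-n} maximizer moves with $\mu_n$, so it does not close the gap either.

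In fact the step can fail under all the stated hypotheses. Take two groups, each with states $\{1,2\}$, transitions i.i.d.\ uniform regardless of the action, $\alpha_n$ uniform, and $C=1$.
Group A has $r_A(s,1)=s$, $r_A(s,0)=0$, $g_A(s,a)=1-a$ and $B_A=1/4$.
Group B has $r_B(s,1)=4+s$, $r_B(s,0)=0$ and $g_B\equiv 0$.
Then $L_A(\lambda,\mu)=\frac{1}{1-\beta}\left(E[(s-\lambda+\mu)^+]-\frac{3}{4}\mu\right)$, so $\mu_A^*(\lambda)=\lambda-1$ for $1<\lambda\le U+1$, which makes state $(A,1)$ exactly indifferent. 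With the paper's tie-breaking, $I_A(1)=U+1$ and $I_A(2)=U+2$, while $I_B(s)=4+s$. Take $U=10$. The uniform distribution is a global attractor of every policy, and $\lambda^{\text{Ind}}=8$ is a strict separator. The index policy keeps group A always active, earning per-step reward $3/2$, and A's constraint is slack even though $\mu_A^*(8)=7>0$. The \textbf{Fluid-Relaxed} optimum instead activates $(A,2)$ fully and half the mass of $(A,1)$ and of $(B,2)$, earning per-step reward $11/4$. Breaking ties toward passivity instead gives $I_A(1)=1$; the index policy then activates $(A,2)$ and $(B,2)$ and violates A's constraint. So an extra hypothesis is needed. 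One sufficient choice: each deterministic $\pi_n^*(\cdot,\lambda^{\text{Ind}},\mu_n^*(\lambda^{\text{Ind}}))$, started from $y^{\text{Ind}}_{n,\cdot}$, meets its penalty constraint, with equality whenever $\mu_n^*(\lambda^{\text{Ind}})>0$.

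Two smaller points. First, the separator definition does not give zero mass at $I_n(s)=\lambda^{\text{Ind}}$; you can instead move $\lambda^{\text{Ind}}$ into the open gap between the largest passive index and the smallest active index. Second, setting $\alpha_n=y^{\text{Ind}}_{n,\cdot}$ makes the index depend on the fixed point it generates; the paper's proof makes the same identification tacitly.
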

\begin{proof}
    The proof proceeds by showing that $(\vec{x}^{\text{Ind}}, \lambda^{\text{Ind}}, [\mu_n^*(\lambda^{\text{Ind}})])$ satisfies the KKT conditions of the \textbf{Fluid-Relaxed($\vec{y}^{\text{Rel}}$)} problem, where $\lambda^{\text{Ind}}$ is the Lagrange multiplier of the capacity constraint in \textbf{Fluid-Ind} and $\mu_n^*(\lambda^{\text{Ind}})$ is the optimal solution of \textbf{Dual-Arm-n} for that $\lambda^{\text{Ind}}$. Checking primal feasibility, dual feasibility, and complementary slackness follows the same steps as the analysis of the Whittle index policy~\cite{weber1990index}. The key difference lies in verifying Lagrangian optimality, which here involves two sets of Lagrange multipliers $\lambda$ and $\vec{\mu}$. Using the strict index separator condition and the definition of the POW index, one can show that $\pi^{\text{Ind}}$ activates arm $n$ in state $s_n$ if and only if $\pi_n^*(s_n, \lambda^{\text{Ind}}, \mu_n^*(\lambda^{\text{Ind}}))=1$, which is precisely the condition for Lagrangian optimality. Please see Appendix~\ref{sec:A} for the full proof.
\end{proof}

Theorem~\ref{theorem:fluidopt} shows that the POW index policy schedules the same set of users as the $\pi^{Rel}$ policy at the fluid limit and in the steady state. Since $\pi^{Rel}$ is one that achieves the largest rewards while satisfying all individual penalty constraints, the POW index policy is also able to do so at the fluid limit. Hence, the POW index policy is asymptotically optimal.

 \section{A Deep Reinforcement Learning Algorithm for Finding the POW Index}\label{sec:policy_gradient} 

In many practical scenarios, the transition dynamics of arms are unknown or difficult to model accurately in advance. In these cases, the POW indices need to be learned during runtime. In this section, we propose a deep reinforcement learning based index policy, called the \emph{Deep Penalty-Optimal Whittle (DeepPOW)} that learns the POW indices directly from data without requiring prior knowledge of the transition kernels.


\subsection{Problem Formulation and Gradients}
We parameterize the POW index and the dual variable using function approximators and formulate their learning as optimization problems. Specifically, we introduce a parameterized function $w^{\phi_n}_{n}(s_n)$ to approximate the POW index $I_n(s_n)$ and another parameterized function $u^{\zeta_n}_{n}(\lambda)$ to approximate the optimal dual variable $\mu_n^*(\lambda)$. A key difficulty is that neither the POW index nor the optimal dual variable can be directly observed from the system. To overcome this issue, we derive objective functions whose maximizer/minimizer correspond to the desired index and dual variable. We then derive the policy gradients with respect to these objective functions.


We first consider the \textbf{Arm-n} problem as shown in Eq. (\ref{Arm_n_lagrangian}). Given a $\lambda$ and $\mu_n=u_n^{\zeta_n}(\lambda)$, the \textbf{Arm-n} problem is equivalent to a MDP whose reward is $ r_n(s_n,a_n)- \lambda a_n-u^{\zeta_n}_n(\lambda)\big(g_{n}(s_{n}, a_{n})-{B_n}\big)$. Given $\phi_n$, we consider a policy for this MDP that chooses $a_{n,t}=1$ if  $w^{\phi_n}_{n}(s_{n,t})\geq \lambda$, and $a_{n,t}=0$, otherwise. Let $Q_{n}^{\phi_n}(s_n,a_n, \lambda, u_n^{ \zeta_n}(\lambda))$ be the state-action function of this policy.
\begin{align}
Q^{\phi_n}_{n}(s_n,a_n, \lambda,& u_n^{ \zeta_n}(\lambda))= r_n(s_n,a_n) - \lambda a_n-u^{\zeta_n}_n(\lambda)\big(g_{n}(s_{n}, a_n)-{B_n}\big)\nonumber \\ &+ \beta \sum_{s'_n}P(s'_n|s_n,a_n)Q^{\phi_n}_{n}(s'_n,a_n, \lambda, u_n^{ \zeta_n}(\lambda)),\label{equation:qfunction}
\end{align}

where $\mathbb{I}(\cdot)$ is the indicator function.

Recall that the POW index $I_n(s_n)$ is defined as the largest $\lambda$ such that the $\pi_n^*(s_n, \lambda, \mu^*_n(\lambda))=1$. Hence, when $u_n^{\zeta_n}(\lambda)=\mu_n^*(\lambda)$, setting $w^{\phi_n}_{n}(s_{n})=I_n(s_n)$ maximizes $Q^{\phi_n}_{n}(s_n,a_n, \lambda, u_n^{ \zeta_n}(\lambda))$ for any $s_n$ and any $\lambda$. Hence, we define the objective function for learning $\phi_{n}$ as 
\begin{align} \label{eq:jFunction}
J_n^{{\phi_{n}}} := \sum_{s \in S} \int_{\lambda=-M}^{\lambda=+M}  Q^{\phi_n}_{n}(s_n,a_n,  \lambda, u_n^{ \zeta_n}(\lambda))  d\lambda,
\end{align}
where $M$ is a sufficiently large constant such that $I_n(s_n) \in [-M,+M]$. Eq. (\ref{eq:jFunction}) transforms the problem of learning the unknown $I_n(s_n)$ into one of maximizing a well-defined objective function. Moreover, we show that there is a simple expression to the gradient of $J_n^{\phi_n}$.

\begin{theorem}\label{thm:policygrad} 
Given the parameter vectors ${\phi}_n$ and $\zeta_n$, if all states $s_n \in S_n$ have distinct values of $ w^{\phi_n}_{n}(s_n)$, then the gradient of the objective function $J_n^{{\phi_{n}}}$ with respect to the parameter vector ${\phi}_{n}$ is given by:
   \begin{align}\label{eq:jGradient} 
    \nabla _{\phi_{n}}J_n^{{\phi_{n}}}  = &  \sum_{s_n \in S_n}\left[Q^{\phi_n}_{n}(s_n,1, w^{\phi_n}_{n}(s_n), u_n^{ \zeta_n}( w^{\phi_n}_{n}(s_n))) \right. \nonumber \\ & \left.  -Q^{\phi_n}_{n}(s_n,0, w^{\phi_n}_{n}(s_n), u_n^{ \zeta_n}( w^{\phi_n}_{n}(s_n)))\right]\nabla _{\phi_{n}}  w^{\phi_n}_{n}(s_n).
\end{align}
\end{theorem}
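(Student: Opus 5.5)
The plan is to make the dependence of the policy on $\phi_n$ explicit, which turns $J_n^{\phi_n}$ into a sum of integrals whose integrands are piecewise smooth in $\lambda$, and then to differentiate with Leibniz's rule. In \eqref{eq:jFunction} we read $Q^{\phi_n}_n(s_n,a_n,\lambda,\cdot)$ with $a_n$ set to the action the threshold policy takes in $s_n$. Writing $\mathbb{I}(\cdot)$ for the indicator, the integrand is
\[
Q^{\phi_n}_n(s_n,1,\lambda,u)\,\mathbb{I}(w^{\phi_n}_n(s_n)\ge\lambda)+Q^{\phi_n}_n(s_n,0,\lambda,u)\,\mathbb{I}(w^{\phi_n}_n(s_n)<\lambda),
\]
with $u=u^{\zeta_n}_n(\lambda)$. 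Here $Q$ is the state-action value of the policy that takes $a$ first and then follows the threshold rule.

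Fix $\lambda$. The threshold policy is determined by the set $\{s: w^{\phi_n}_n(s)\ge\lambda\}$. Since the values $w^{\phi_n}_n(s)$ are distinct, we sort them as $w_{(1)}<\dots<w_{(|S_n|)}$. On each open interval between consecutive sorted values, the active set is constant. Hence $Q^{\phi_n}_n(s,a,\lambda,u^{\zeta_n}_n(\lambda))$ does not depend on $\phi_n$ there, except through the interval endpoints. Its value is the fixed linear solve $(I-\beta P_\pi)^{-1}$ applied to a reward affine in $\lambda$ and $u$.

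We therefore split the integral over $[-M,M]$ into pieces at the points $w_{(k)}$. For each state $s$, the integrand is then a fixed function of $\lambda$ on every piece, so $\phi_n$ enters $J_n^{\phi_n}$ only through the breakpoints $w^{\phi_n}_n(s)$. We assume these lie strictly inside $(-M,M)$ and differ from one another. Leibniz's rule then says that the derivative of $J_n^{\phi_n}$ with respect to a breakpoint $w^{\phi_n}_n(s)$ equals the jump of the total integrand, summed over all states $s'$, across $\lambda=w^{\phi_n}_n(s)$. This jump is multiplied by $\nabla_{\phi_n}w^{\phi_n}_n(s)$ via the chain rule.

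The key step, and the main obstacle, is evaluating this jump. Crossing $\lambda=w^{\phi_n}_n(s)$ from below to above changes the action of state $s$ from $1$ to $0$, while every other state keeps its action. For the term with $s'=s$, the jump is $Q(s,1,\lambda,u)-Q(s,0,\lambda,u)$, evaluated at $\lambda=w^{\phi_n}_n(s)$ with the continuation policy on either side. At the boundary point itself we must check that both one-sided continuation policies give the same value. This should hold because at $\lambda=w^{\phi_n}_n(s)$ the policy "act in $s$" and "rest in $s$" are exactly tied by construction of the threshold.

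For $s'\neq s$ the jump comes only from the continuation policy changing at $s$. I would try to show that these jumps cancel, or vanish, by the same tie argument. Concretely, at the boundary the switching state $s$ is indifferent at the first step, and I would argue this extends to every later visit to $s$, so value functions under either continuation coincide in the limit. This identification, and in particular the tie argument for the $s'\ne s$ terms, is delicate. If it fails, the formula should be interpreted with $Q$ evaluated under the convention at the threshold.

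Summing over $s$ then gives \eqref{eq:jGradient}. In that expression the sign convention puts active minus passive, since the active region lies below the breakpoint and the integral's upper piece shrinks as $w$ increases.
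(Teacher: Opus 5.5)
Your route is the paper's: sort the thresholds, observe that on each open interval between them the integrand is a $\phi_n$-free function of $\lambda$, and read the gradient off the Leibniz boundary terms. You also locate the crux correctly. The argument you sketch for it, however, fails.

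There is no tie at $\lambda=w^{\phi_n}_n(s)$. Here $w^{\phi_n}_n$ is an arbitrary actor output, not the POW index, so nothing makes acting and resting in $s$ indifferent there. If they were indifferent, the bracket in \eqref{eq:jGradient} would vanish for every $s$.

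Without a tie, the jumps do not reduce to that bracket, and in fact they cannot. Let $s_k$ be the state with the $k$-th smallest threshold. Let $\hat\pi^{k}$ and $\hat\pi^{k+1}$ be the threshold policies just below and just above $\lambda=w^{\phi_n}_n(s_k)$, with state-action functions $\hat Q^{k},\hat Q^{k+1}$, all evaluated at that $\lambda$ and $\mu_n=u^{\zeta_n}_n(\lambda)$. These two policies differ only at $s_k$. Moreover, $\hat\pi^{k}$ is the policy in force at $\lambda=w^{\phi_n}_n(s_k)$ itself, so $\hat Q^{k}(s_k,a)$ is exactly the $Q^{\phi_n}_n$ appearing in \eqref{eq:jGradient}.

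Leibniz gives $\partial J_n^{\phi_n}/\partial w^{\phi_n}_n(s_k)=\sum_s\big[V^{\hat\pi^{k}}(s)-V^{\hat\pi^{k+1}}(s)\big]$. The performance-difference identity then gives, for every $s$,
\[
V^{\hat\pi^{k}}(s)-V^{\hat\pi^{k+1}}(s)=d_s(s_k)\big[\hat Q^{k}(s_k,1)-\hat Q^{k}(s_k,0)\big],
\]
where $d_s(s_k)=\sum_{t\ge 0}\beta^t\,\mathrm{Pr}(s_t=s_k\mid s_0=s)$ under $\hat\pi^{k+1}$. Hence $\partial J_n^{\phi_n}/\partial w^{\phi_n}_n(s_k)=\kappa_k\big[\hat Q^{k}(s_k,1)-\hat Q^{k}(s_k,0)\big]$ with $\kappa_k=\sum_s d_s(s_k)\ge 1$.

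So the cross-state jumps reinforce rather than cancel. Even the diagonal jump carries the factor $d_{s_k}(s_k)>1$ whenever $s_k$ can recur. A single state with a self-loop shows this concretely. There $J_n^{\phi_n}=\frac{1}{1-\beta}\big(\int_{-M}^{w}R_\lambda(1)\,d\lambda+\int_{w}^{M}R_\lambda(0)\,d\lambda\big)$, with $R_\lambda(a)=r_n(s,a)-\lambda a-u^{\zeta_n}_n(\lambda)\,(g_n(s,a)-B_n)$. Its derivative is $(R_w(1)-R_w(0))/(1-\beta)$, whereas the bracket in \eqref{eq:jGradient} equals $R_w(1)-R_w(0)$.

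The paper's appendix does not close this step either. After applying Leibniz, it declares that the boundary terms telescope because $\hat\pi^{k+1}(s)=\hat\pi^{k}(s)$ for $s\neq s_k$. That ignores that $\hat Q^{k}(s,\cdot)$ and $\hat Q^{k+1}(s,\cdot)$ still differ through their continuation policies, for every $s$ including $s_k$ — exactly the effect you flagged.

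So, with $Q^{\phi_n}_n$ taken as the state-action function of the threshold policy, \eqref{eq:jGradient} is right only up to the positive per-state weights $\kappa_k$. Each partial derivative in $w^{\phi_n}_n(s_k)$ has the right sign, but the resulting vector is not $\nabla_{\phi_n}J_n^{\phi_n}$. Your fallback of reading $Q$ ``with the threshold convention'' does not change this.

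The identity is exact whenever $Q$ does not depend on $\phi_n$. One example is the optimal Q-function of the \textbf{Arm-n} MDP at $(\lambda,u^{\zeta_n}_n(\lambda))$, which is what the critic's $\max_{a'}$ Bellman equation \eqref{equation:qfunction-arm_n} targets. Then each state's integral $\int_{-M}^{M}Q(s,\mathbb{I}(w^{\phi_n}_n(s)\ge\lambda),\lambda,u^{\zeta_n}_n(\lambda))\,d\lambda$ depends on $\phi_n$ only through its own $w^{\phi_n}_n(s)$. The fundamental theorem of calculus plus the chain rule then yield \eqref{eq:jGradient} directly, with no distinctness assumption.

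A correct proof must therefore either carry the weights $\kappa_k$ or change what $Q$ denotes. The jump-cancellation argument cannot be completed as stated.
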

\begin{proof}
The proof is similar to the one provided by Nakhleh et al.~\cite{nakhleh2022deeptop}. Please see the detailed proof in Appendix~\ref{sec:C}.
\end{proof}

Next, we derive the objective for learning $u_n^{ \zeta_n}(\lambda)$ to predict the optimal dual variable $\mu_n$. Consider the \textbf{Dual-Arm-$n$} problem, which can be expressed as

\begin{equation}
    \min_{\zeta_n}E[\max_{a_n}Q^{\phi_n}_{n}(s_n,a_n,  \lambda, u_n^{ \zeta_n}(\lambda))],
\end{equation} 
where the expectation is taken over the initial state distribution.

When $w_n^{\phi_n}(s_n)=I_n(s_n)$, setting $u_n^{\zeta_n}(\lambda)=\mu_n^*(\lambda)$ minimizes the above objective for any fixed $\lambda$. Therefore, we define the objective function for learning $\zeta_n$ as

\begin{align} \label{eq:kFunction}
K_n^{{\zeta_{n}}} =  \Big(\max_{a_n}Q^{\phi_n}_{n}(s_n,a_n,  \lambda, u_n^{ \zeta_n}(\lambda))\Big).
\end{align}
Assuming $u_n^{ \zeta_n}(\lambda)$ is twice differentiable with respect to $\zeta_n$ , then $K_n^{{\zeta_{n}}}$ is continuous and twice differentiable almost everywhere. The gradient of Eq. (\ref{eq:kFunction}) with respect to $\zeta_n$ is then given by

\begin{align} \label{eq:grad_kFunction}
\nabla_{\zeta_n} K_n^{{\zeta_{n}}} =\nabla_{\zeta_n}  \Big(\max_{a_n}Q^{\phi_n}_{n}(s_n,a_n,  \lambda, u_n^{ \zeta_n}(\lambda))\Big).
\end{align}

  \subsection{Deep Penalty-Optimal Whittle}\label{sec:Deeppow_index_policy} 
 
    We now introduce the Deep Penalty-Optimal Whittle (DeepPOW), a model-free, actor-critic deep reinforcement learning algorithm that learns the POW indices. DeepPOW employs two actor networks, parameterized by \( \phi_{n} \) and \( \zeta_{n} \), alongside a critic network parameterized by \( \theta_{n} \). The actor network \( \phi_{n} \) takes the state \( s_n \) as input and outputs a value \( w^{\phi_{n}}_{n}(s_n) \). The actor network \( \zeta_{n} \) takes the $\lambda$ as input and produces $u_n^{\zeta_n}(\lambda )$. DeepPOW aims to train $\phi_{n}$ and $\zeta_n$ such that $w^{\phi_{n}}_{n}(s_n)\approx I_{n}(s_n)$, $u_n^{\zeta_n}(\lambda )\approx \mu_n^*(\lambda)$ and  $ Z^{\theta_n}_{n}(s_n,a_n, \lambda, \mu_n^{\zeta}(\lambda)) \approx Q^{\phi_n}_{n}(s_n,a_n, \lambda, u_n^{ \zeta_n}(\lambda))$. The critic network $\theta_n$ takes $(s_n, a_n, \lambda,\mu_n)$ as input and produces a number $ Z^{\theta_n}_{n}(s_n,a_n, \lambda, \mu_n)$ that approximate the ${Q}^{\theta_n}_{n}(s_n,a_n, \lambda, \mu_n)$. DeepPOW aims to train $\theta_n$ so that it satisfies the Bellman equation
\begin{align}
 Z^{\theta_n}_{n}(s_n,a_n, \lambda,\mu_n)=& r_n(s_n,a_n)- \lambda a_n-\mu_n\big(g_{n}(s_{n}, a_{n})-B_n\big)\nonumber \\ &+ \beta\sum_{s'_n}P(s'_n|s_n,a_n)\max_{a'_n}Z^{\theta_n}_{n}(s'_n,a'_n,\lambda,\mu_n).\label{equation:qfunction-arm_n}
\end{align}
For more stable learning, DeepPOW also maintains a target critic network $\theta'_n$ that is updated slower than $\theta_n$.

We now discuss how DeepPOW activates arms and updates all neural networks. In each time step $t$, the controller sorts all restless arms by their predicted indices $w_n^{\phi_n}(s_{n,t})$. With probability $1-\epsilon$, the controller activates the $C$ arms with the highest $w_n^{\phi_n}(s_{n,t})$, and, with probability $\epsilon$, it activates $C$ randomly selected arms. The controller observes the reward $r_{n,t}$, the penalty $g_{n,t}$, and the state transition $s_{n,t+1}$. The controller then stores $(s_{n,t},a_{n,t},r_{n,t},g_{n,t},s_{n,t+1})$ in the replay buffer.

In each time step $t$, DeepPOW randomly samples a batch of transitions $(s_{n,t},a_{n,t},r_{n,t},g_{n,t},s_{n,t+1})$ from the replay buffer to update the neural networks. DeepPOW utilizes Theorem~\ref{thm:policygrad} to update the actor network $\phi_n$ and calculates the average of  
\begin{align}\label{eq:actor_gradient}
& \left[Z^{\theta_n}_{n}(s_{n,t},1, w_n^{\phi_n}(s_{n,t}), u_n^{\zeta_n}( w_n^{\phi_n}(s_{n,t})))\right.\nonumber \\
& \left.-Z^{\theta_n}_{n}(s_{n,t},0, w_n^{\phi_n}(s_{n,t}), u_n^{\zeta_n}( w_n^{\phi_n}(s_{n,t})))\right]\nabla _{\phi_{n}}  w^{\phi_n}_{n}(s_{n,t}),
\end{align}
over all transitions in the batch as an empirical estimate of $\nabla _{\phi_{n}}J_n^{{\phi_{n}}}$ and then uses it to update $\phi_{n}$.

DeepPOW updates the actor network $\zeta_n$ using Eq. (\ref{eq:grad_kFunction}). For each transition in the batch, DeepPOW randomly samples a value $\lambda \in [-M,M]$ and appends it to the transition. DeepPOW calculates the average of

\begin{align}\label{eq:grad_mu_update}
\nabla_{\zeta_n}
\Big[
\max_{a_n}
Z^{\theta_n}_{n}
\big(
s_{n,t},a_n,\lambda,u_n^{\zeta_n}(\lambda)
\big)
\Big].
\end{align}
over all transitions in the batch and uses it to update $\zeta_n$. To ensure $u_n^{\zeta_n}(\lambda)\in[-U, U]$, we include a sigmoid function with proper scaling in the last layer of the actor network $\zeta_n$.


Finally, we discuss the update of the critic network $\theta_n$. DeepPOW samples ${\lambda}\in[-M,+M]$ and ${\mu_n}\in[-U,U]$ for each transition in the batch. DeepPOW estimates the loss function as the average of 
\begin{align}\label{eq:q_lossFunction}
  &\left(Z^{\theta_n}_{n}(s_{n,t},a_{n,t}, \lambda, \mu_n)-r_{n,t} + {\lambda}a_{n,t} + \mu_n\big(g_{n,t}-B_n\big) \right. \nonumber \\
 &\left. -\beta \max_{a'_n}Z_{n}^{\theta'_n}(s_{n,t+1},a'_n, \lambda, \mu_n) \right)^2
\end{align}
over all transitions in the batch. The estimated gradient of the loss function is then the average of
\begin{align}\label{eq:q_gradlossFunction}
 &\left(Z^{\theta_n}_{n}(s_{n,t},a_{n,t}, \lambda, \mu_n)-r_{n,t} + {\lambda}a_{n,t} + \mu_n\big(g_{n,t}-B_n\big) \right. \nonumber \\
 &\left. -\beta \max_{a'_n}Z_{n}^{\theta'_n}(s_{n,t+1},a'_n, \lambda, \mu_n) \right) \nabla _{\theta_n}Z^{\theta_n}_{n}(s_{n,t},a_{n,t}, \lambda, \mu_n).
\end{align}
DeepPOW uses this estimated gradient to update $\theta_n$ and then applies a soft update to the target critic network $\theta'_n$.

The complete algorithm is described in Alg.~\ref{alg:DeepPOW Index Policy}.

 \begin{algorithm}[tb]
        \caption{DeepPOW} \label{alg:DeepPOW Index Policy}
        \begin{algorithmic}
           \STATE Initialize $N$ actor networks with parameter vector $\phi_n$, $N$ actor networks with parameter vector $\zeta_n$, $N$ critic networks with parameter vector $\theta_n$, and $N$ target critic with parameter $\theta'_n$. Set $\theta'_n$ to $\theta_n$.

            \FOR{each time step $t$}
            \FOR{each arm $n$}
            \STATE Observe state $s_{n,t}$.
            \STATE Set index $I_{n,t}\leftarrow w_n^{\phi_n}(s_{n,t})$.
            \ENDFOR
            \STATE With probability $1-\epsilon$, activate the $C$ arms with the highest $I_{n,t}$.
            \STATE Otherwise, activate $C$ randomly chosen arms.
            \STATE Store $(s_{n,t}, a_{n,t}, r_{n,t}, g_{n,t}, s_{n,t+1})$ in the replay buffer of arm $n$.
          \FOR{n= 1, 2, \dots, N}
          \STATE Sample a batch of transitions $(s_{n,{t_k}},a_{n,{t_k}},r_{n,{t_k}},g_{n,{t_k}},s_{n,t_{k+1}})$ from the replay buffer of arm $n$.
          \STATE Append randomly selected $\lambda \in [-M, +M]$ and $\mu_n  \in [-U, +U]$ to each transition in the batch.
          \STATE Set $\Delta\phi_n$ to be the average of Eq.~(\ref{eq:actor_gradient}) of all transitions.
          \STATE Set $\Delta\zeta_n$ to be the average of Eq.~(\ref{eq:grad_mu_update}) of all transitions.
          \STATE Set $\Delta\theta_n$ to be the average of Eq.~(\ref{eq:q_gradlossFunction}) of all transitions.
          \STATE Update $\phi_{n}$ by $\Delta \phi_{n}$, $\zeta_{n}$ by $\Delta\zeta_{n}$ and $\theta_n$ by $\Delta\theta_n$.
          \STATE $\theta'_n\leftarrow\tau \theta_{n}+(1-\tau)\theta'_n$.
          \ENDFOR
          \ENDFOR
          
        \end{algorithmic}
      \end{algorithm}

\section{Simulations}\label{sec:simulation}

In this section, we evaluate the proposed {POW index policy} and {DeepPOW} across three constrained RMAB applications under different network settings. We evaluate the performance of the POW index policy in the non-asymptotic regime, examining how quickly it converges to the fluid-limit benchmark and how it compares with several existing scheduling policies. We also evaluate the ability of DeepPOW to learn effective index policies in unknown environments by comparing it with POW and other learning-based baselines.

We use the LP-based method in Section~\ref{sec:calculate_index} to calulate the POW indices and check for indexability. In all three applications and all settings, we find that all arms are indexable.

\subsection{Simulation Overview}

We compare the POW index policy with: (1)~\textbf{Whittle index policy}, which ignores penalty constraints; (2)~\textbf{FaWT}~\cite{li2022efficient}, which restricts scheduling to constraint-satisfying arms with the highest Whittle indices; and (3)~\textbf{DPP}~\cite{neely2013lyapunov}, which uses Lyapunov optimization with virtual queue updates. We also report the \textbf{Fluid Relaxed} solution of~(\ref{equation:fluid-relaxed}) as an upper benchmark.

For DeepPOW, we compare with \textbf{DeepTOP}~\cite{nakhleh2022deeptop} and \textbf{DeepTOP+FaWT} (DeepTOP indices combined with FaWT arm selection).

To study finite-system convergence, we generate scaled systems with $KN$ arms and capacity $KC$ for increasing $K$. Results are averaged over 20 independent runs, each with 10,000 evaluation time steps. We report average total reward and average constraint violation with standard deviation bars. 

See Appendix~\ref{sec:impl} for training, implementation details, and parameter settings for all three scenarios.

\subsection{Throughput Maximization with Activation Constraint}



We consider a throughput maximization with activation constraint scenario motivated by IoT and 5G systems, where a BS schedules uplink transmissions for a set of user equipments (UEs) with time-varying channel qualities. In each slot, the BS observes the channel states of all UEs and decides which ones to schedule. When a UE is scheduled, it delivers a number of packets that depends on its channel state. Meanwhile, each UE has a minimum activation requirement: it must be scheduled at least a fraction $\delta_n$ of the time. The BS aims to maximize long-term throughput of all UEs while ensuring their activation requirements are met.

We now discuss how to model each UE as a restless arm with a penalty constraint. We denote the state of UE $n$ at time $t$ by $s_{n,t}$, taking values in the discrete set $\{1,2,\dots,50\}$. When UE $n$ is scheduled at state $s_n$, it delivers $\theta_{n,0}+\theta_{n,1}s_{n,t}$ packets, where $\theta_{n,0}$ and $\theta_{n,1}$ capture the baseline throughput and channel sensitivity of UE $n$, respectively. We assume that, for all $n$, $P_n(s_{n,t+1}=s' \mid s_{n,t}=s,a_{n,t}) = \frac{1}{50}$, for all $s,s' \in \{1,\dots,50\}$ and $a_{n,t}\in\{0,1\}$. The reward of UE $n$ is therefore defined as: $R_n(s_{n,t},a_{n,t}) = (\theta_{n,0}+\theta_{n,1}s_{n,t})a_{n,t}$. To enforce activation regularity, we define the penalty as: $g_n(s_{n,t},a_{n,t}) = 1-a_{n,t}$, with $B_n = 1-\delta_n$, where $\delta_n$ is the minimum activation fraction required by UE $n$. The parameter settings are summarized in Appendix~\ref{sec:impl}.

Fig.~\ref{fig:TP_reward} shows the average reward and average constraint violation under finite-system scaling. In both settings, the POW index policy remains consistently close to the fluid-relaxed benchmark. The rewards of the POW index policy are virtually identical to the fluid-relaxed benchmark. When $K$ is small, there is a small amount of constraint violation, but the violation approaches zero as $K$ increases. 

Compared with other baseline policies, POW consistently offers a superior reward–violation tradeoff. The Whittle index policy achieves the highest throughput but incurs high constraint violations, as it ignores penalty requirements. FaWT improves feasibility by restricting scheduling to constraint-satisfying arms, but this conservative selection reduces throughput. DPP maintains feasibility across all settings, yet its reward is lower because queue-based updates prioritize short-term correction of constraint violations over long-term scheduling efficiency. Overall, these results demonstrate that POW effectively balances throughput maximization and service regularity, achieving near-fluid performance while preserving constraint satisfaction in finite systems.

Fig.~\ref{fig:TP_deep} presents the learning performance under unknown transition dynamics for $(N,C,K)=(4,1,10)$ and $(6,1,10)$. DeepPOW converges to the reward level of the POW index policy while maintaining near-zero constraint violation, confirming successful recovery of the constrained index structure through interaction with the environment. The small residual violation observed for $(6,1,10)$ arises from averaging over 20 independent runs. While individual runs often achieve near-zero violation at most time steps, stochastic variability across runs slightly increases the mean. The standard deviation shown in Fig.~\ref{fig:TP_deep_b} highlights that constraint violations are typically negligible in single runs. Simulation results also show that DeepPOW outperforms both DeepTOP and DeepTOP+FaWT in terms of reward and feasibility. 

\begin{figure}[t]
    \centering
    \subfigure[Comparison of index policies with $N=4, C=1$]{
        \includegraphics[width=1\linewidth]{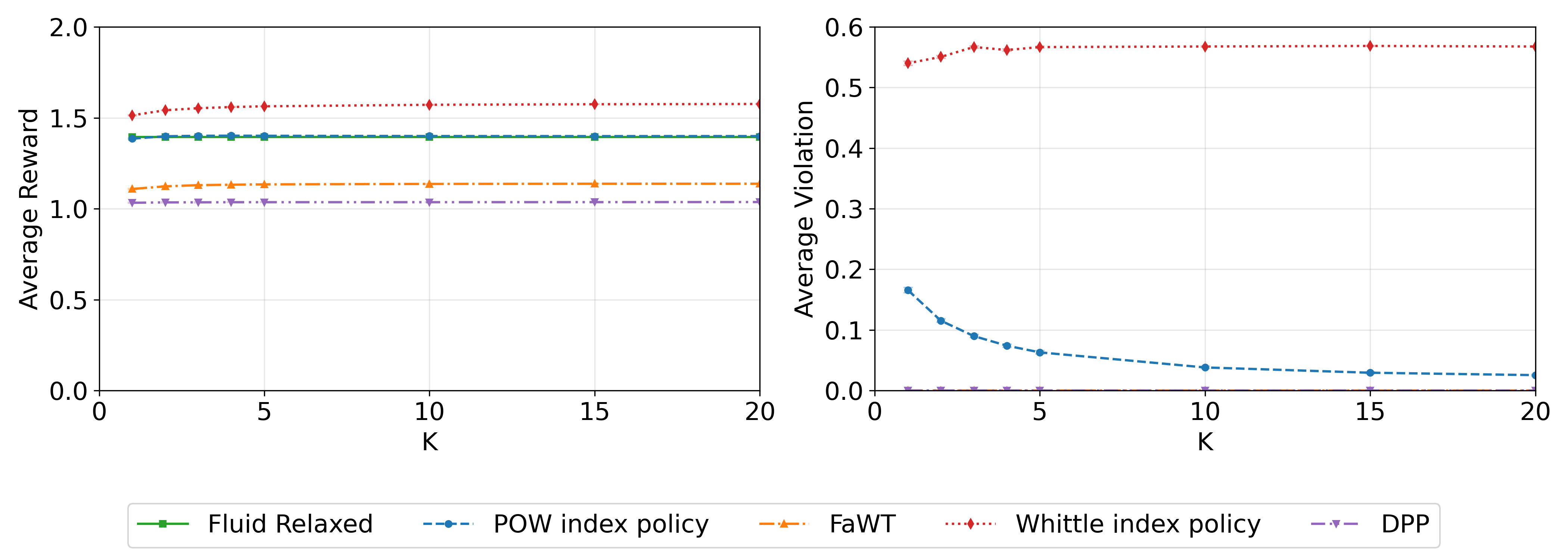}
    }

    \subfigure[Comparison of index policies with $N=6$ and $C=1$]{
        \includegraphics[width=1\linewidth]{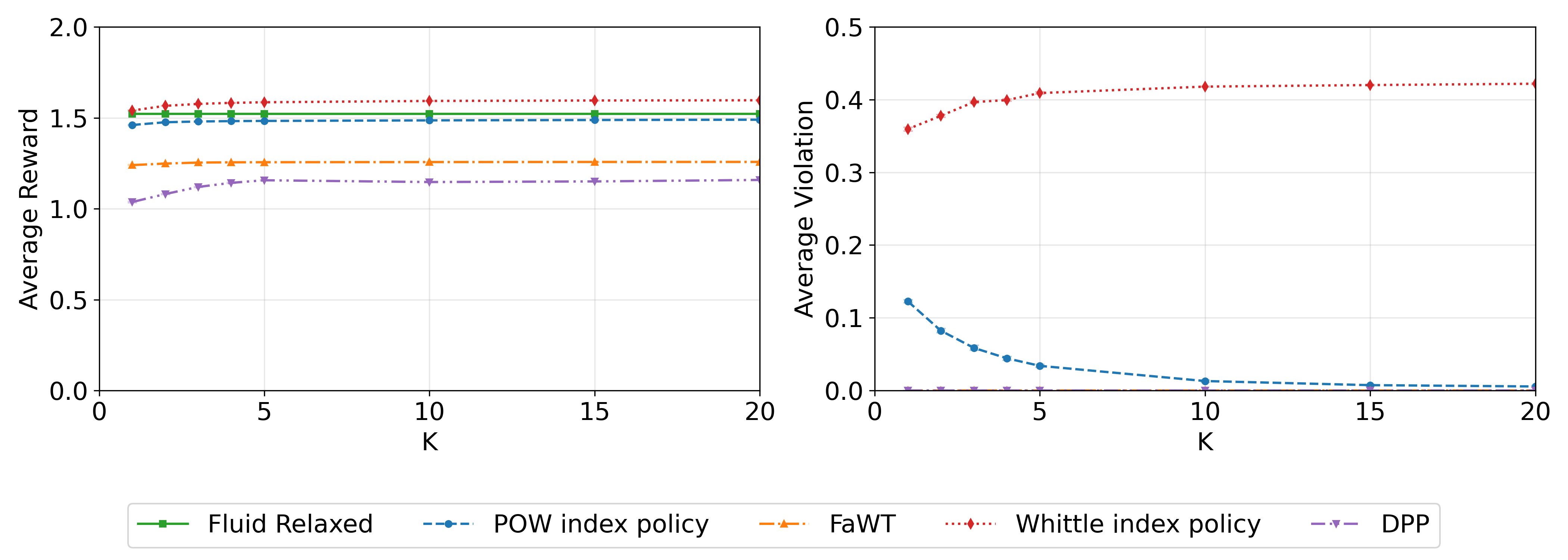}
    }
    \caption{Average reward and average constraint violation for throughput maximization with activation constraints.}
    \label{fig:TP_reward}
\end{figure}

\begin{figure}[t]
    \centering
    \subfigure[$N=4$ and $C=1, K=10$]{
        \includegraphics[width=1\linewidth]{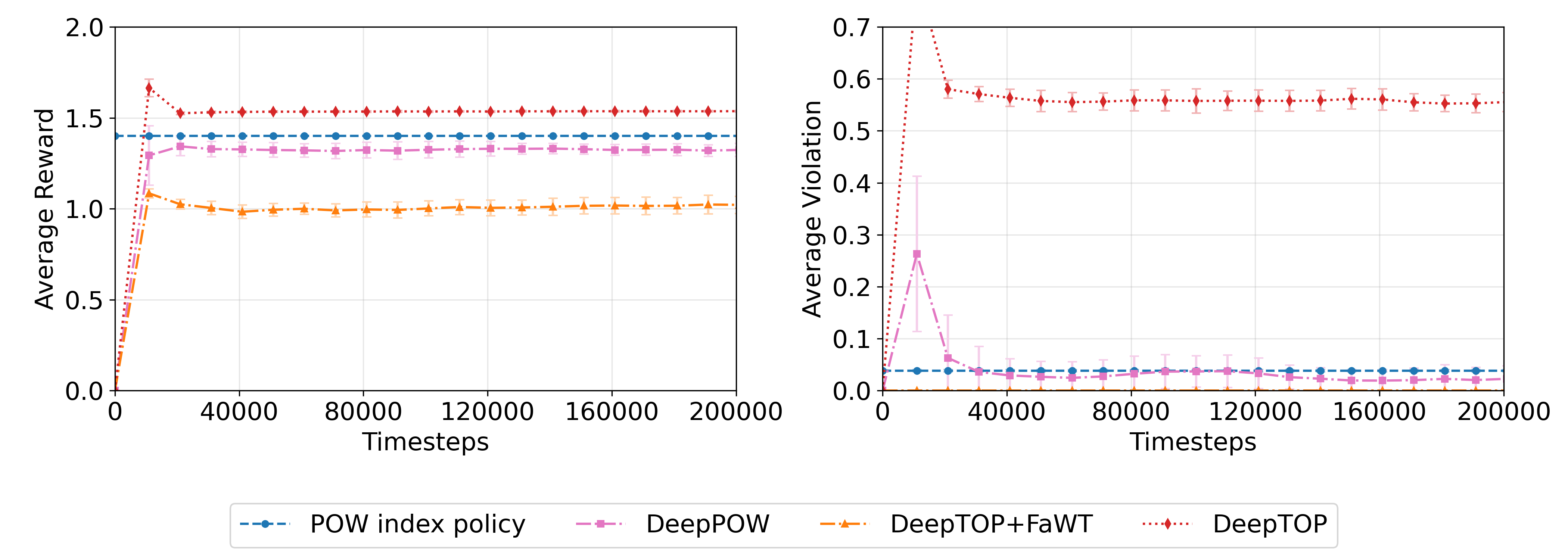}
    \label{fig:TP_deep_a}}

    \subfigure[$N=6, C=1$ and $K=10$]{
        \includegraphics[width=1\linewidth]{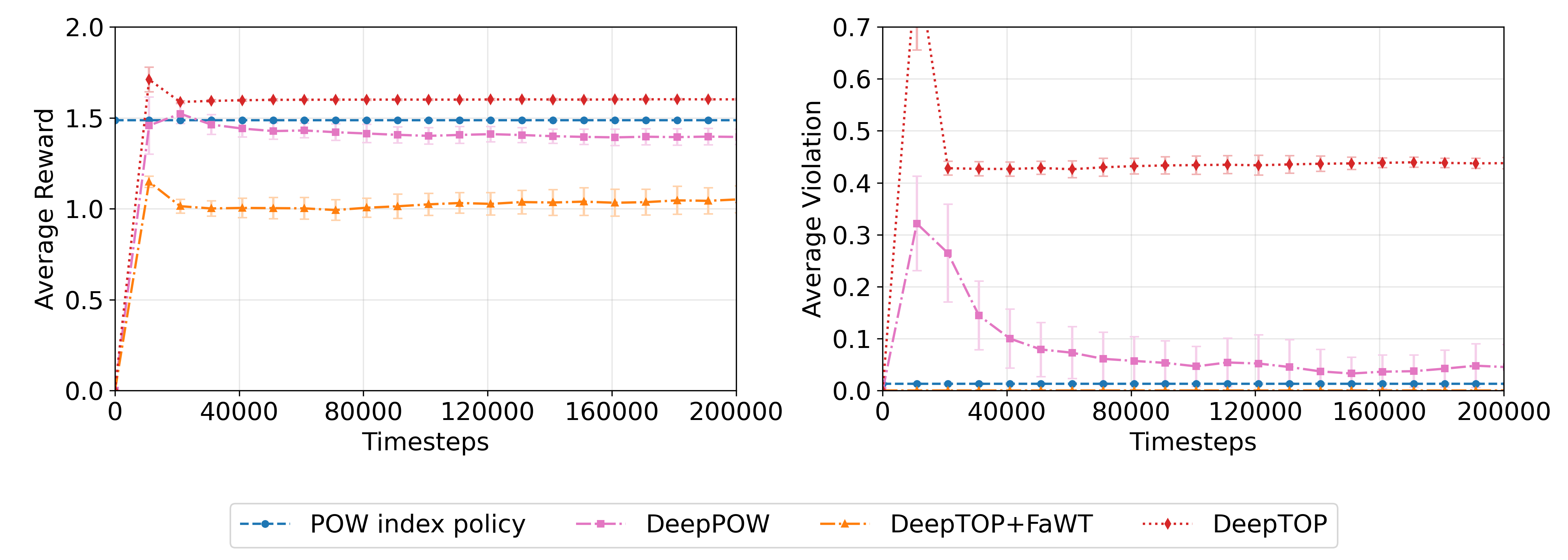}
    \label{fig:TP_deep_b}}
    \caption{Comparison of learning policies for throughput maximization with activation constraints.}
    \label{fig:TP_deep}
\end{figure}

\subsection{Remote Sensing}



We consider a remote sensing scenario where a base station (BS) is gathering real-time surveillance data from a set of IoT devices. The performance of an IoT device is measured by its AoI, which is defined as the number of time steps since the last packet delivery. We consider unreliable and heterogeneous wireless channels, where every transmission for IoT device $n$ is successful with probability $p_n$. Further, since IoT devices are battery-powered, we consider that each IoT device expends one unit of energy for each transmission and has a stringent power consumption requirement. The goal of the BS is to find a scheduling policy that minimizes long-term average total AoI in the system while satisfying the power consumption requirement of each IoT device. 

We now discuss how to model each IoT device as a restless arm with penalty constraint. We model the state of device $n$ at time $t$, denoted by $s_{n,t}$, by its AoI at time $t$. The AoI of a device will increase by 1 in each time step without a successful transmission, and will drop to 1 whenever there is a successful transmission. To ensure a finite state space, we cap the AoI at 30. In this application, $a_{n,t}$ corresponds to the indicator function that the BS schedules device $n$ for transmission at time $t$. Since each transmission from device $n$ is successful with probability $p_n$, the transition kernel is defined as: $ P_n(s_{n,t+1}=\min\{s+1,30\}|s_{n,t}=s,a_{n,t}=0)=1$,
$P_n(s_{n,t+1}=\min\{s+1,30\}|s_{n,t}=s,a_{n,t}=1)=1-p_n$ and  $P_n(s_{n,t+1}=1|s_{n,t}=s,a_{n,t}=1)=p_n$. The reward is defined as the negative normalized AoI: $R_n(s_{n,t},a_{n,t}) = -\frac{s_{n,t}}{30}$, and the penalty function is defined as: $g_n(s_{n,t},a_{n,t}) = a_{n,t}$, where $B_n$ specifies the energy budget. Parameter settings are shown in Appendix~\ref{sec:impl}.



Fig.~\ref{fig:RS_reward} shows the average reward and average constraint satisfaction for the remote sensing problem across the two system settings. The POW index policy remains consistently close to the fluid-relaxed benchmark. Among all feasible policies, POW achieves the lowest AoI while satisfying per-device energy constraints. Comparing baseline policies, the Whittle index policy achieves lower AoI, but this comes at the cost of violating energy constraints, indicating that unconstrained scheduling is infeasible in practice. Both DPP and FaWT have considerably worse AoI than POW.

Fig.~\ref{fig:RS_deep} presents the learning performance under unknown transition dynamics, reward and constraint violation results are averaged over the last 1,000 time steps. DeepPOW quickly converges to the reward level of POW while maintaining constraint satisfaction. DeepPOW also significantly outperforms DeepTOP and DeepTOP+FaWT.

\begin{figure}[t]
    \centering
    \subfigure[Comparison of index policies with $N=4$ and $C=1$]{
        \includegraphics[width=1\linewidth]{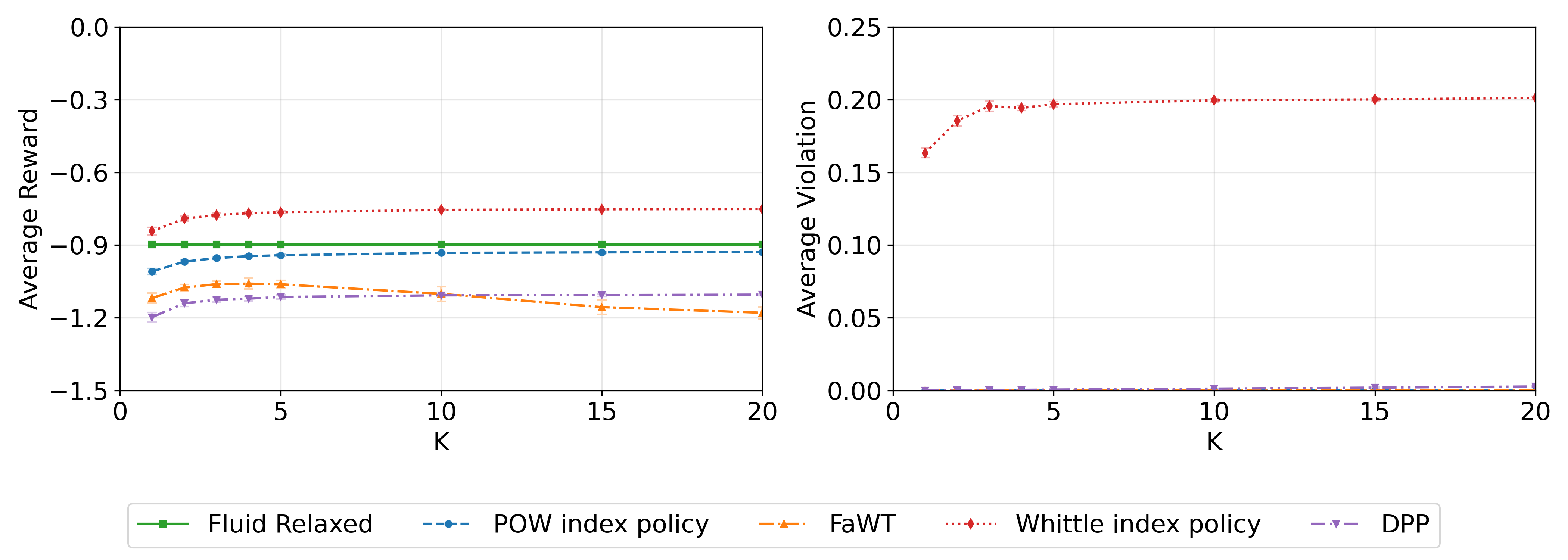}
    }

    \subfigure[Comparison of index policies with $N=8$ and $C=1$]{
        \includegraphics[width=1\linewidth]{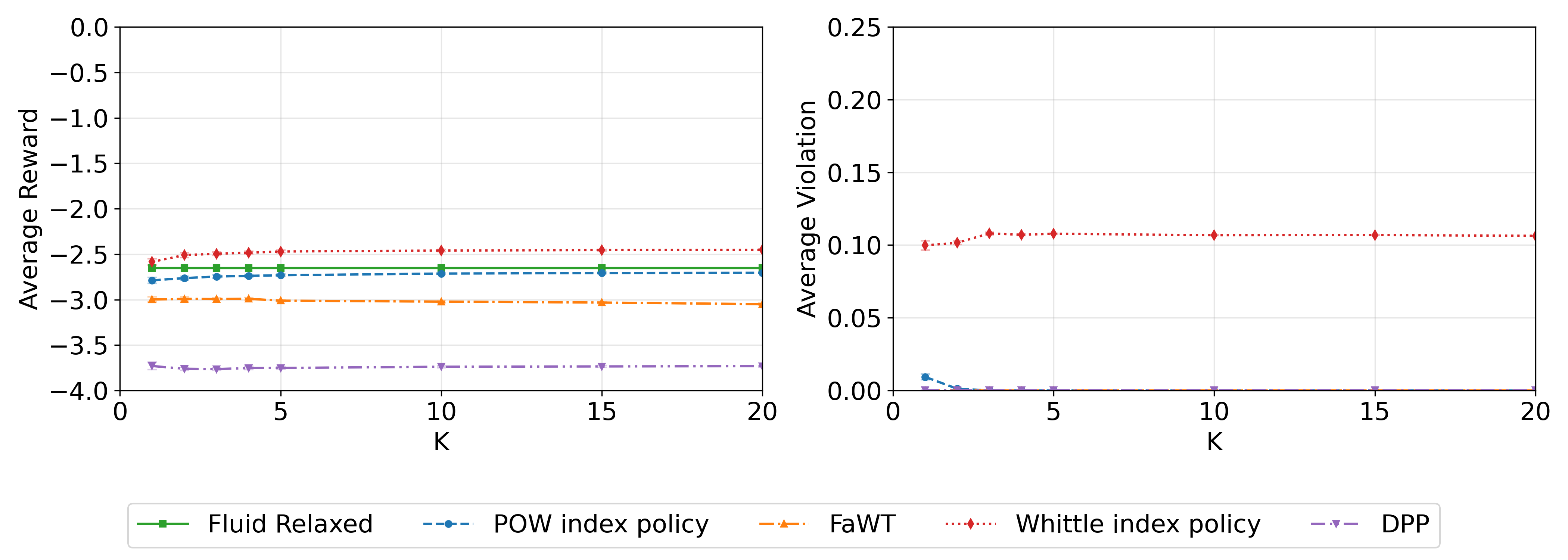}
    }
    \caption{Average reward and average constraint violation for remote sensing.}
    \label{fig:RS_reward}
\end{figure}

\begin{figure}[t]
    \centering
    \subfigure[$N=4$ and $C=1$]{
        \includegraphics[width=1\linewidth]{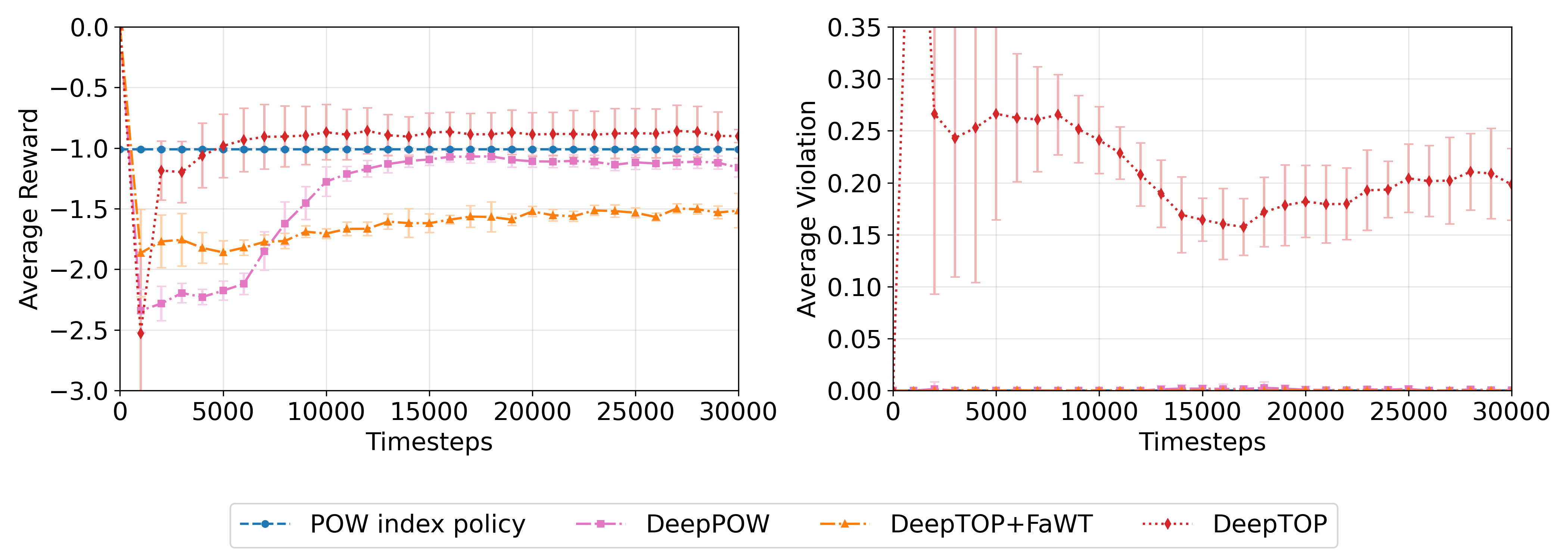}
    }

    \subfigure[$N=8$ and $C=1$]{
        \includegraphics[width=1\linewidth]{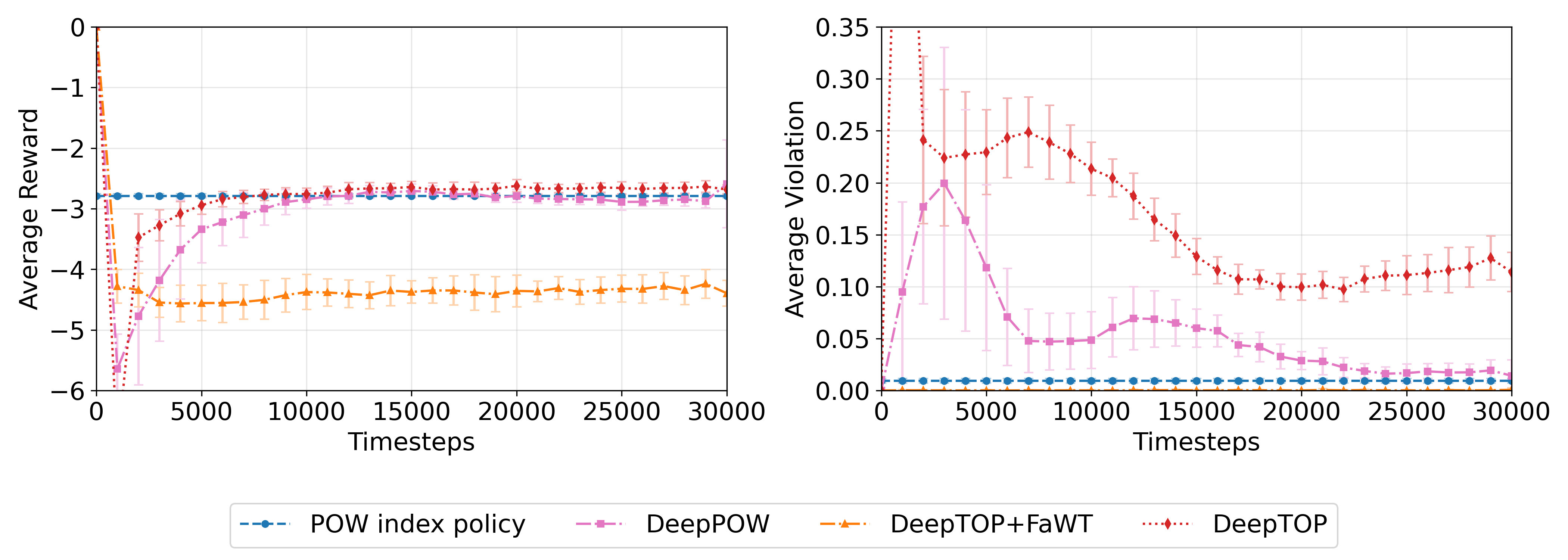}
    }
    \caption{Comparison of learning policies for remote sensing.}
    \label{fig:RS_deep}
\end{figure}

\subsection{Throughput Maximization with Service Regularity}
In the last application, we consider a throughput maximization problem with service regularity constraints, where the BS must maximize long-term total throughput while ensuring that each UE is scheduled frequently enough to maintain fresh status updates at the BS. Service regularity depends on how long a UE remains unscheduled, making the scheduling problem sensitive not only to throughput opportunities but also to temporal fairness. Consequently, the policy must balance instantaneous throughput gains against the risk of excessive update delay.

We model each UE as a restless arm whose state contains both channel state and service regularity information. We let $d_{n,t}\in\{1, 2, \dots, 10\}$ be the channel state of UE $n$ at time $t$. If UE $n$ is scheduled when its channel state is $d_n$, then it can deliver $(\theta_{n,0}+\theta_{n,1}d_{n,t})$ packets. We use $h_{n,t}\in\{0, 1,\dots,9\}$ to indicate the number of slots since the last time UE $n$ was scheduled. The state of UE $n$ at time $t$ is then denoted by $s_{n,t}=(d_{n,t},h_{n,t})$. Since the goal is to maximize total throughput while ensuring service regularity, we define the reward function as $R_n(s_{n,t},a_{n,t}) = (\theta_{n,0}+\theta_{n,1}d_{n,t})a_{n,t}$ and the penalty function as $g_n(s_{n,t},a_{n,t}) = \frac{h_{n,t}}{9}$. The parameter choices for $\vec{\theta}_0$, $\vec{\theta}_1$, and $\vec{B}$ are summarized in Appendix~\ref{sec:impl}.

\begin{figure}[t]
    \centering
    \subfigure[Comparison of index policies with $N=4$ and $C=1$]{
        \includegraphics[width=1\linewidth]{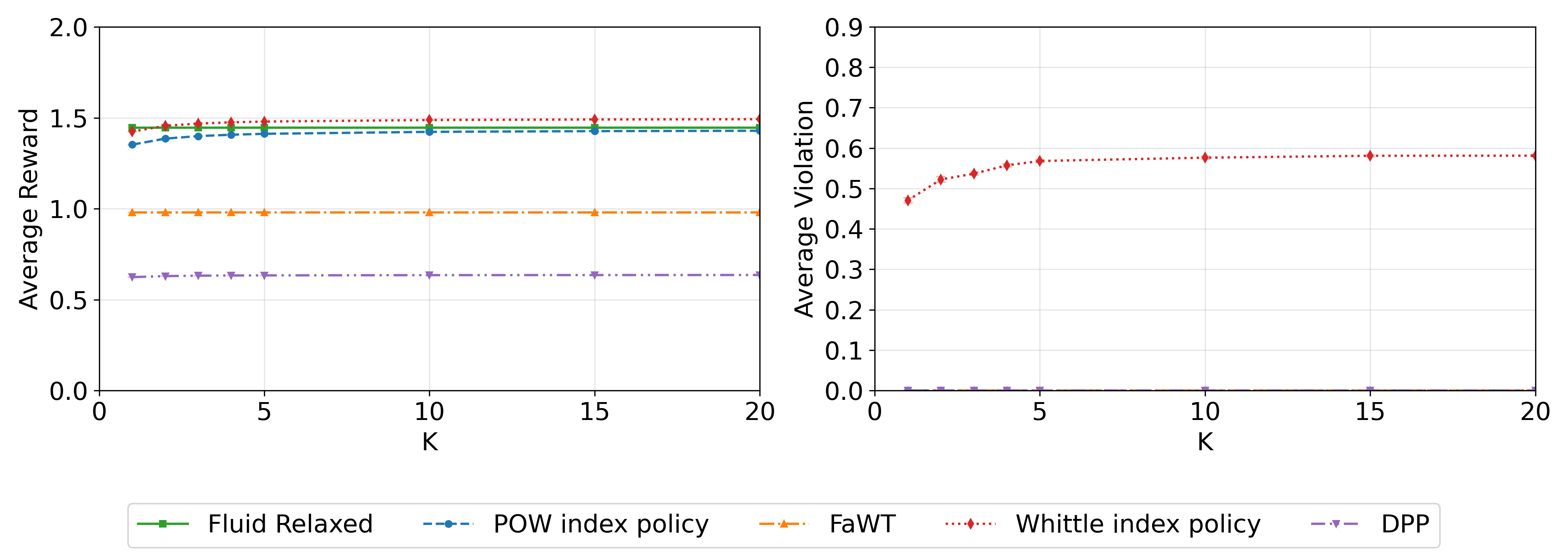}
    }

    \subfigure[Comparison of index policies with $N=6$ and $C=1$]{
        \includegraphics[width=1\linewidth]{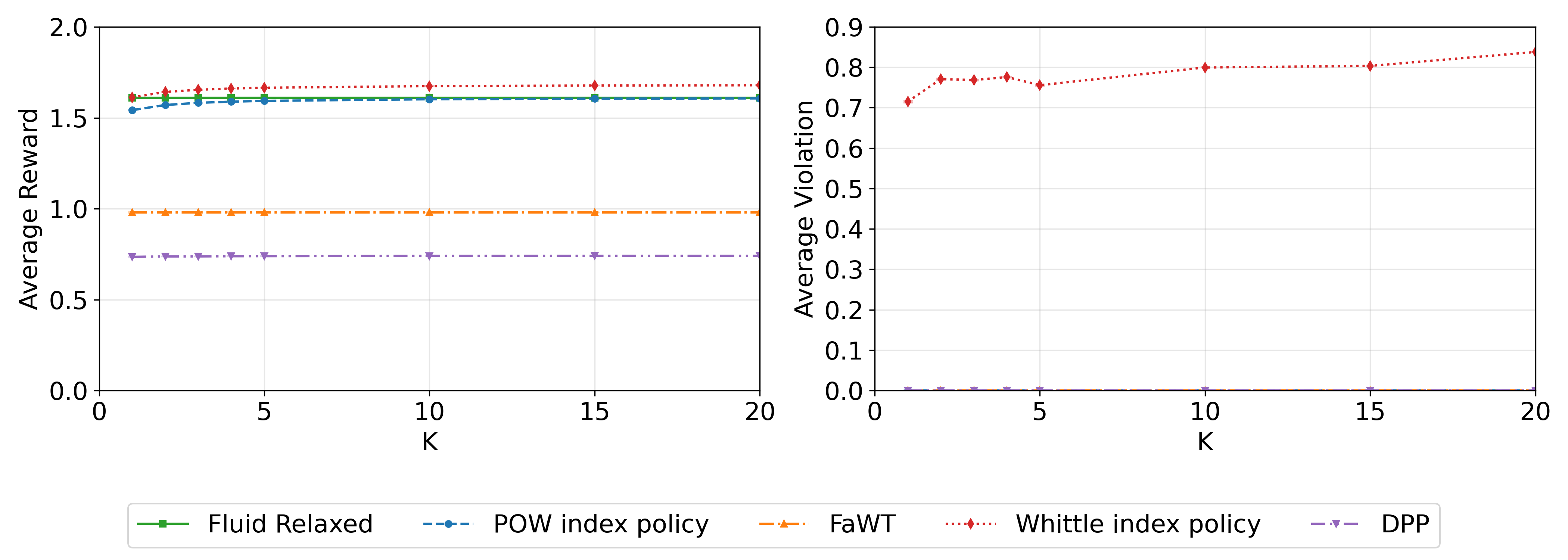}
    }
    \caption{Average reward and average constraint violation for throughput maximization with service regularity constraints.}
    \label{fig:TP_regularity}
\end{figure}

\begin{figure}[t]
    \centering
    \subfigure[$N=4$ and $C=1$]{
        \includegraphics[width=1\linewidth]{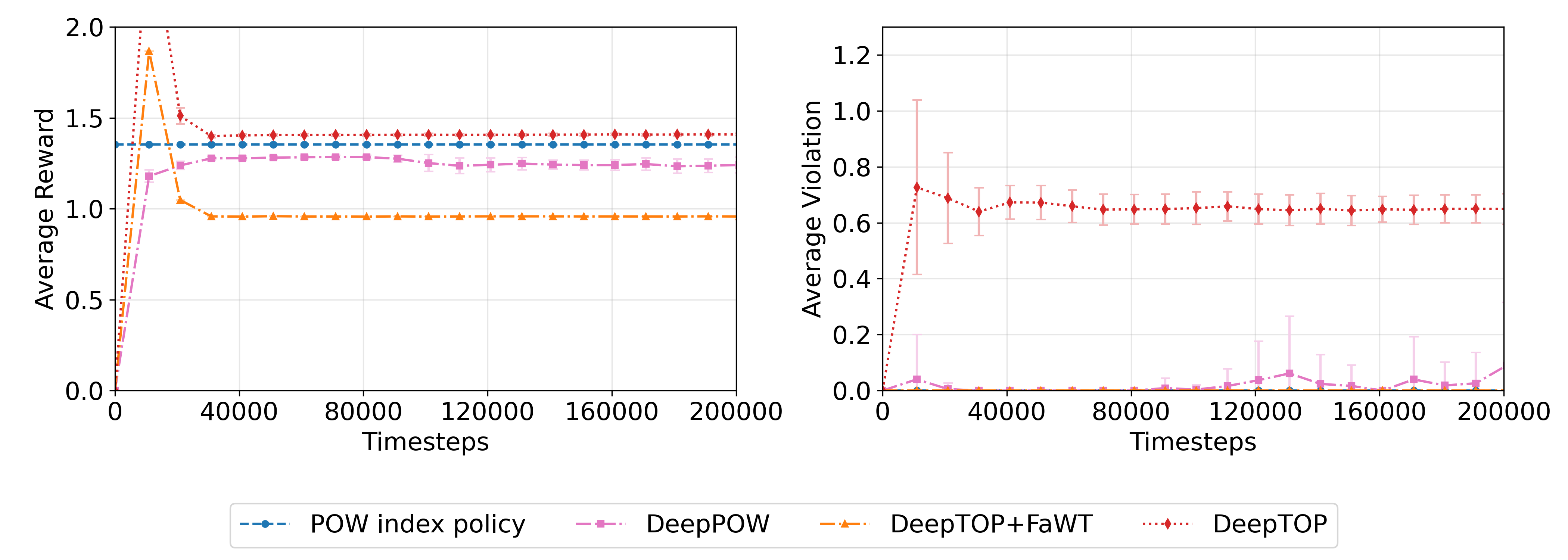}
    }

    \subfigure[$N=6$ and $C=1$]{
        \includegraphics[width=1\linewidth]{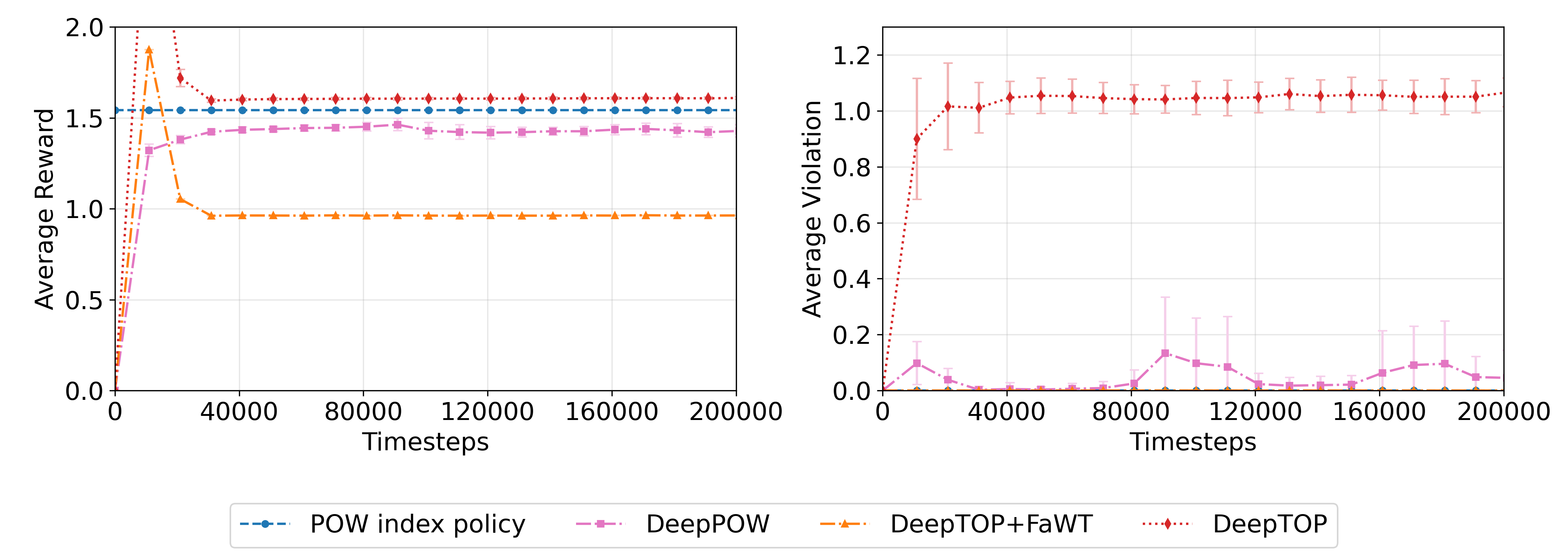}
    }
    \caption{Comparison of learning policies for throughput maximization with service regularity constraints.}
    \label{fig:TP_regularity_deep}
\end{figure}

FaWT does not directly apply to this environment because the constraint is defined through service regularity rather than a minimum activation ratio. To adapt FaWT, we first allocate resources to users with positive regularity violations, then use the remaining budget to serve users with high Whittle indices. 

Fig.~\ref{fig:TP_regularity} illustrates the average reward and constraint violation under varying network scales, while Fig.~\ref{fig:TP_regularity_deep} shows the learning performance over time. Like the previous two applications, the POW index policy is very close to the Fluid-Relaxed solution and outperforms other policies in all scenarios. The DeepPOW policy demonstrates strong learning performance, closely tracking the POW policy in both reward and feasibility. 


\section{Conclusion}\label{sec:conclusion}


We presented a general framework for Restless Multi-Armed Bandits with individual penalty constraints, capturing per-user service requirements such as energy limits, activation frequencies, or data freshness. We proposed the Penalty-Optimal Whittle (POW) index policy. We proved that the POW index policy is asymptotic optimal under fluid scaling. To handle unknown dynamics, we developed DeepPOW, a reinforcement learning approach that learns the POW indices from interaction. This work provides a scalable approach for constrained resource allocation and opens avenues for future research, including multi-resource constraints, average-reward settings, and learning in high-dimensional or partially observed systems.

\bibliographystyle{ACM-Reference-Format}
\bibliography{references}

\newpage
\appendix

\section{Proof of Theorem~\ref{theorem:fluidopt}}\label{sec:A}

We prove that $(\vec{x}^{\text{Ind}}, \lambda^{\text{Ind}}, [\mu_n^*(\lambda^{\text{Ind}})])$ satisfies the KKT conditions of the \textbf{Fluid-Relaxed($\vec{y}^{\text{Rel}}$)} problem. Since \textbf{Fluid-Relaxed} is a linear program, KKT conditions are both necessary and sufficient for optimality, so this establishes $(\vec{x}^{\text{Ind}}, \lambda^{\text{Ind}}, [\mu_n^*(\lambda^{\text{Ind}})]) = (\vec{x}^{\text{Rel}}, \lambda^{\text{Rel}}, [\mu_n^{\text{Rel}}])$ and hence $\pi^{\text{Rel}}(\vec{y}^{\text{Ind}}) = \pi^{\text{Ind}}(\vec{y}^{\text{Ind}})$.

The KKT conditions of \textbf{Fluid-Relaxed($\vec{y}^{\text{Rel}}$)} are:
\begin{enumerate}
    \item \textit{(Primal feasibility)} $\sum_s \sum_a g_n(s,a)x_{n,s,a} \leq \frac{B_n}{1-\beta}$ for all $n$, and $\sum_{n,s} x_{n,s,1} = \frac{C}{1-\beta}$.\label{item:Primal}
    \item \textit{(Dual feasibility)} $\mu^{\text{Rel}}_n\geq0$ for all $n$, and $\lambda^{\text{Rel}}\geq0$.\label{item:Dual}
    \item \textit{(Complementary slackness)}
    \begin{align*}
      \mu^{\text{Rel}}_n\!\left(\frac{B_n}{1-\beta}-\sum_s\sum_a g_n(s,a)x^{\text{Rel}}_{n,s,a}\right) &= 0, \; \forall n,\\
      \lambda^{\text{Rel}}\!\left(\frac{C}{1-\beta}-\sum_n\sum_s x^{\text{Rel}}_{n,s,1}\right) &= 0.
    \end{align*}\label{item:Complementary}
    \item \textit{(Lagrangian optimality)} $\vec{x}^{\text{Rel}}$ maximizes, for each $n$,
    \begin{align*}
      &\sum_s\sum_a\!\left(r_n(s,a)+\mu^{\text{Rel}}_n\!\left(\tfrac{B_n}{1-\beta}-g_n(s,a)\right)-\lambda^{\text{Rel}}\mathbf{1}_{\{a=1\}}\right)x_{n,s,a}\\
      \text{s.t.}\quad &\sum_a x_{n,s,a}-\beta\!\sum_{s'}\sum_{a'}P_n(s|s',a')x_{n,s',a'}=y^{\text{Rel}}_{n,s},\;\forall s.
    \end{align*}\label{item:lag}
\end{enumerate}

We now define the quantities associated with $\pi^{\text{Ind}}$. Let $\vec{z}^{\text{Ind}} = \pi^{\text{Ind}}(\vec{y}^{\text{Ind}})$ be the per-step allocation at the global attractor. Since $\vec{y}^{\text{Ind}}$ is the steady-state distribution, we have $\vec{y}_t = \vec{y}^{\text{Ind}}$ and $\vec{z}_t = \vec{z}^{\text{Ind}}$ for all $t \geq 0$ when $\vec{y}_0 = \vec{y}^{\text{Ind}}$. Hence $\vec{x}^{\text{Ind}} := \sum_{t=0}^\infty \beta^t \vec{z}^{\text{Ind}} = \vec{z}^{\text{Ind}}/(1-\beta)$. Let $\lambda^{\text{Ind}}$ be the Lagrange multiplier of the capacity constraint in \textbf{Fluid-Ind($\vec{y}^{\text{Ind}}$)} (Eq.~\ref{equation:fluid-index1}), and let $\mu_n^{\text{Ind}} := \mu_n^*(\lambda^{\text{Ind}})$ be the optimal solution of the \textbf{Dual-Arm-n} problem (Eq.~\ref{Arm_n_dual}) for that $\lambda^{\text{Ind}}$. We assume $\mu_n^{\text{Ind}}$ is finite throughout.

\textbf{Condition 1 (Primal feasibility).}
We verify both primal constraints for $\vec{x}^{\text{Ind}}$.

\textit{Capacity constraint:} Since $\lambda^{\text{Ind}}$ is a strict index separator (Definition~\ref{D2}), we have $\sum_{n,s} z^{\text{Ind}}_{n,s,1} = C$. Using $\vec{x}^{\text{Ind}} = \vec{z}^{\text{Ind}}/(1-\beta)$, this gives $\sum_{n,s} x^{\text{Ind}}_{n,s,1} = C/(1-\beta)$.

\textit{Penalty constraint:} Since $\mu_n^{\text{Ind}} = \mu_n^*(\lambda^{\text{Ind}})$ is the optimal dual variable of the penalty constraint in \textbf{Dual-Arm-n}, the primal constraint of that problem requires $\sum_s \sum_a g_n(s,a) x^{\text{Ind}}_{n,s,a} \leq B_n/(1-\beta)$, which is exactly the penalty constraint of \textbf{Fluid-Relaxed}.

\textbf{Condition 2 (Dual feasibility).}
By Definition~\ref{D2}, the strict index separator $\lambda^{\text{Ind}}$ satisfies $\lambda^{\text{Ind}} > 0$. The constraint $\mu_n \geq 0$ in the \textbf{Dual-Arm-n} problem (Eq.~\ref{Arm_n_dual}) ensures $\mu_n^{\text{Ind}} \geq 0$.

\textbf{Condition 3 (Complementary slackness).}
\textit{For $\lambda^{\text{Ind}}$:} Since $\lambda^{\text{Ind}} > 0$, we must show $\sum_{n,s} x^{\text{Ind}}_{n,s,1} = C/(1-\beta)$. This was established in Condition~1.

\textit{For $\mu_n^{\text{Ind}}$:} We must show
\[
\mu_n^{\text{Ind}}\!\left(\frac{B_n}{1-\beta} - \sum_s\sum_a g_n(s,a) x^{\text{Ind}}_{n,s,a}\right) = 0.
\]
We consider three cases. If $\sum_s\sum_a g_n(s,a) x^{\text{Ind}}_{n,s,a} < B_n/(1-\beta)$, then minimizing over $\mu_n \geq 0$ in \textbf{Dual-Arm-n} yields $\mu_n^{\text{Ind}} = 0$, so the product is zero. If $\sum_s\sum_a g_n(s,a) x^{\text{Ind}}_{n,s,a} = B_n/(1-\beta)$, the product is zero for any $\mu_n^{\text{Ind}} \geq 0$. If $\sum_s\sum_a g_n(s,a) x^{\text{Ind}}_{n,s,a} > B_n/(1-\beta)$, the minimizer of \textbf{Dual-Arm-n} would be $\mu_n^{\text{Ind}} = \infty$, contradicting our assumption of finiteness. Hence this case cannot occur, and complementary slackness holds.

\textbf{Condition 4 (Lagrangian optimality).}
We must show that $\vec{x}^{\text{Ind}}$ maximizes the Lagrangian in condition~(\ref{item:lag}) under $\lambda^{\text{Ind}}$ and $[\mu_n^{\text{Ind}}]$.

The $N$ subproblems of that Lagrangian decompose independently over arms. For group $n$, the subproblem is:
\begin{align}\label{problem:lagOpt_fluid-relaxed}
    \max_{x_{n,s,a}}\;&\sum_s\sum_a\!\left(r_n(s,a)+\mu^{\text{Ind}}_n\!\left(\tfrac{B_n}{1-\beta}-g_n(s,a)\right)\right.\nonumber\\
    &\left.\quad-\lambda^{\text{Ind}}\mathbf{1}_{\{a=1\}}\right) x_{n,s,a} \nonumber \\
     \text{s.t.}\;&\sum_a x_{n,s,a}-\beta\!\sum_{s'}\sum_{a'}P_n(s|s',a')x_{n,s',a'}\nonumber\\
     &\quad =y^{\text{Rel}}_{n,s}, \quad \forall s.
\end{align}
This is equivalent to the \textbf{Arm-n} MDP (Eq.~\ref{Arm_n_lagrangian}) with $\mu_n = \mu_n^{\text{Ind}}$ and $\lambda = \lambda^{\text{Ind}}$. The optimal policy is therefore $\pi_n^*(s_n, \lambda^{\text{Ind}}, \mu_n^{\text{Ind}})$.

Since $\pi^{\text{Ind}}$ is at steady state, $z^{\text{Ind}}_{n,s,a,t} = z^{\text{Ind}}_{n,s,a}$ for all $t$, and
\[
x^{\text{Ind}}_{n,s,a} = \sum_{t=0}^{\infty}\beta^t z^{\text{Ind}}_{n,s,a,t} = \frac{z^{\text{Ind}}_{n,s,a}}{1-\beta}.
\]
By the global attractor property, starting from $\vec{y}_0 = \vec{y}^{\text{Ind}}$ gives $\vec{y}^{\text{Rel}} = \vec{y}^{\text{Ind}}$. The optimal solution of problem~(\ref{problem:lagOpt_fluid-relaxed}) is therefore:
\[
  x^{\text{Ind}}_{n,s,1} =
\begin{cases}
\dfrac{y^{\text{Ind}}_{n,s}}{1-\beta}, & \text{if }\pi_n^*(s_n, \lambda^{\text{Ind}}, \mu_n^{\text{Ind}})=1,\\
0, & \text{otherwise,}
\end{cases}
\]
\[
  x^{\text{Ind}}_{n,s,0} =
\begin{cases}
0, & \text{if }\pi_n^*(s_n, \lambda^{\text{Ind}}, \mu_n^{\text{Ind}})=1, \\
\dfrac{y^{\text{Ind}}_{n,s}}{1-\beta}, & \text{otherwise,}
\end{cases}
\]
which matches $\vec{x}^{\text{Ind}} = \vec{z}^{\text{Ind}}/(1-\beta)$. Thus $\vec{x}^{\text{Ind}}$ satisfies condition~(\ref{item:lag}).

Since all four KKT conditions are satisfied, $(\vec{x}^{\text{Ind}}, \lambda^{\text{Ind}}, [\mu_n^*(\lambda^{\text{Ind}})])$ is the unique optimal solution of \textbf{Fluid-Relaxed($\vec{y}^{\text{Rel}}$)}, i.e., it equals $(\vec{x}^{\text{Rel}}, \lambda^{\text{Rel}}, [\mu_n^{\text{Rel}}])$. Consequently, $\pi^{\text{Rel}}(\vec{y}^{\text{Ind}}) = \pi^{\text{Ind}}(\vec{y}^{\text{Ind}})$ and $\vec{y}^{\text{Rel}} = \vec{y}^{\text{Ind}}$, completing the proof. \qed

\section{Proof of Theorem~\ref{thm:policygrad}}\label{sec:C}

We drop the subscript $n$ for simplicity. Taking the gradient of Eq.~(\ref{eq:jFunction}):
\begin{align} \label{eq:gradientJa}
\nabla_{\phi}J^{\phi} = \nabla_{\phi} \sum_{s \in S} \int_{-M}^{+M} Q^{\phi}(s,a,\lambda,\mu^{\zeta}(\lambda))\,d\lambda.
\end{align}

Renumber states so that $w^{\phi}(s_{|S|}) > \cdots > w^{\phi}(s_1)$. Define $\mathcal{M}^0 = -M$, $\mathcal{M}^k = w^{\phi}(s_k)$ for $1 \leq k \leq |S|$, and $\mathcal{M}^{|S|+1} = +M$. Let $S_k = \{s_k, \ldots, s_{|S|}\}$. For $\lambda \in (\mathcal{M}^k, \mathcal{M}^{k+1})$, the policy $\pi^\phi$ activates $s$ if and only if $s \in S_{k+1}$. Define $\hat{\pi}^{k+1}$ as the policy that activates $s$ iff $s \in S_{k+1}$, and let $\hat{Q}^{k+1}(s,a,\lambda)$ denote its Q-function at $(\lambda,\mu^\zeta(\lambda))$. Since $Q^\phi(s,a,\lambda,\mu^\zeta(\lambda)) = \hat{Q}^{k+1}(s,a,\lambda)$ for all $\lambda \in (\mathcal{M}^k, \mathcal{M}^{k+1})$, we can rewrite Eq.~(\ref{eq:gradientJa}) as:
\begin{align}\label{eq:gradientJb}
\nabla_{\phi}J^{\phi} = \sum_{k=0}^{|S|}\sum_{s \in S} \nabla_{\phi} \int_{\mathcal{M}^k}^{\mathcal{M}^{k+1}} \hat{Q}^{k+1}(s,\hat{\pi}^{k+1}(s),\lambda)\,d\lambda.
\end{align}

Applying the Leibniz integral rule to each term in Eq.~(\ref{eq:gradientJb}):

\begin{align} \label{eq:gradientJe}
\nabla_{\phi}J^{\phi} = \sum_{k=0}^{|S|}\sum_{s \in S}
\Big[&\hat{Q}^{k+1}(s,\hat{\pi}^{k+1}(s),\mathcal{M}^{k+1})\nabla_{\phi}\mathcal{M}^{k+1}
\nonumber\\
-\,&\hat{Q}^{k+1}(s,\hat{\pi}^{k}(s),\mathcal{M}^{k})\nabla_{\phi}\mathcal{M}^{k}\Big],
\end{align}

where the integral term $\int \nabla_\phi \hat{Q}^{k+1}(\cdot), d\lambda$ vanishes since it is the state-action value function under the fixed policy $\hat{\pi}^{k+1}$, i.e., $\hat{Q}^{k+1}(s,\hat{\pi}^{k+1}(s),\lambda)$, and thus does not depend on $\phi$ for $\lambda \in (\mathcal{M}^k, \mathcal{M}^{k+1})$.

Since $\hat{\pi}^{k+1}(s) = \hat{\pi}^k(s)$ for all $s \neq s_k$, the boundary terms in Eq.~(\ref{eq:gradientJe}) telescope. Re-indexing over $k$ from $1$ to $|S|$ and isolating the $s_k$ contribution:
\begin{align}
\nabla_{\phi}J^{\phi}
&= \sum_{k=1}^{|S|}\Big[
  \hat{Q}^{k}(s_k,1,\mathcal{M}^{k})
  - \hat{Q}^{k}(s_k,0,\mathcal{M}^{k})
\Big]\nabla_{\phi}\mathcal{M}^{k} \nonumber\\
&= \sum_{s \in S}\Big[
  Q^{\phi}(s,1,w^{\phi}(s),\mu^{\zeta}(w^{\phi}(s))) \nonumber\\
&\quad - Q^{\phi}(s,0,w^{\phi}(s),\mu^{\zeta}(w^{\phi}(s)))
\Big]\nabla_{\phi}w^{\phi}(s),
\end{align}
which is Eq.~(\ref{eq:jGradient}). \qed

\section{Implementation Details}\label{sec:impl}

All neural networks (actor and critic) consist of two fully connected hidden layers with 128 neurons each and are trained using the Adam optimizer. For POW and DeepPOW index computation, the discount factor is $\beta=0.99$. The dual variable range is set to $U=5$ for the throughput maximization with activation constraint and throughput maximization with service regularity scenarios, and $U=10$ for the remote sensing scenario. Each DeepPOW and DeepTOP training run uses 400{,}000 time steps. All results are averaged over 20 independent runs, each evaluated over 10{,}000 time steps.

Parameter settings for all three scenarios are listed in Table~\ref{table:all_settings}.

\begin{table}[h!]
\centering
\footnotesize
\caption{Parameter settings for all three simulation scenarios.}
\label{table:all_settings}
\begin{tabular}{c|ccc}
\multicolumn{4}{c}{\textbf{TP w/ Activation Constraint}} \\ \hline
$(N,C)$ & $\vec{\theta}_0$ & $\vec{\theta}_1$ & $\vec{\delta}$ \\ \hline
$(4,1)$ & [.3,.5,.7,.9] & $4{\times}[.02]$ & [.35,.35,.05,.05] \\
$(6,1)$ & [.3,.4,\ldots,.9] & $6{\times}[.02]$ & [.2,.2,.1,.1,.05,.05] \\ \hline
\multicolumn{4}{c}{\textbf{Remote Sensing}} \\ \hline
$(N,C)$ & $\vec{p}$ & \multicolumn{2}{c}{$\vec{B}$} \\ \hline
$(4,1)$ & [.2,.4,.6,.8] & \multicolumn{2}{c}{[.9,.9,.1,.1]} \\
$(8,1)$ & [.2,.3,\ldots,.9] & \multicolumn{2}{c}{[.4,.4,.2,.2,.1,.1,.05,.05]} \\ \hline
\multicolumn{4}{c}{\textbf{TP w/ Service Regularity}} \\ \hline
$(N,C)$ & $\vec{\theta}_0$ & $\vec{\theta}_1$ & $\vec{B}$ \\ \hline
$(4,1)$ & [.4,.5,.6,.7] & $4{\times}[.1]$ & [.2,.2,.1,.1] \\
$(6,1)$ & [.4,.5,\ldots,.9] & $6{\times}[.1]$ & [.4,.4,.8,.8,1,1]
\end{tabular}
\end{table}

\end{document}